  \providecommand\BibTeX{{%
    Bib\TeX}}}
\tiny\color{gray},
\def\BibTeX{{\rm B\kern-.05em{\sc i\kern-.025em b}\kern-.08em
    T\kern-.1667em\lower.7ex\hbox{E}\kern-.125emX}}
\newcommand{\eat}[1]{}
\newcommand{\eg}{e.g.,\xspace}
\newcommand{\ie}{i.e.,\xspace}
\newtheorem{proposition}{Proposition}
\begin{document}

\title{Neutral Agent-based Adversarial Policy Learning against Deep Reinforcement Learning in Multi-party Open Systems}

\author{Qizhou Peng}
\affiliation{%
  \institution{State Key Laboratory of Cyberspace Security Defense, Institute of Information Engineering, Chinese Academy of Sciences}
  \city{Beijing}
  \country{China}}
\email{pengqizhou@iie.ac.cn}

\author{Yang Zheng}
\authornote{Corresponding author.}

\affiliation{%
  \institution{State Key Laboratory of Cyberspace Security Defense, Institute of Information Engineering, Chinese Academy of Sciences}
  \city{Beijing}
  \country{China}
}
\email{zhengyang@iie.ac.cn}

\author{Yu Wen}
\affiliation{%
 \institution{State Key Laboratory of Cyberspace Security Defense, Institute of Information Engineering, Chinese Academy of Sciences}
 \city{Beijing}
 \country{China}}
\email{wenyu@iie.ac.cn}

\author{Yanna Wu}
\affiliation{%
 \institution{State Key Laboratory of Cyberspace Security Defense, Institute of Information Engineering, Chinese Academy of Sciences}
 \city{Beijing}
 \country{China}}
\email{wuyanna@iie.ac.cn}

\author{Yingying Du}
\affiliation{%
 \institution{State Key Laboratory of Cyberspace Security Defense, Institute of Information Engineering, Chinese Academy of Sciences}
 \city{Beijing}
 \country{China}}
\email{duyingying@iie.ac.cn}

\begin{abstract}
Reinforcement learning (RL) has been an important machine learning paradigm for solving long-horizon
sequential decision-making problems under uncertainty. By integrating deep neural networks (DNNs) into the RL framework, deep reinforcement learning (DRL) has emerged, which achieved significant success in various domains. However, the integration of DNNs also makes it vulnerable to adversarial attacks. Existing adversarial attack techniques mainly focus on either directly manipulating the environment with which a victim agent interacts or deploying an adversarial agent that interacts with the victim agent to induce abnormal behaviors. While these techniques achieve promising results, their adoption in multi-party open systems remains limited due to two major reasons: impractical assumption of full control over the environment and dependent on interactions with victim agents.

To enable adversarial attacks in multi-party open systems, in this paper, we redesigned an adversarial policy learning approach that can mislead well-trained victim agents without requiring direct interactions with these agents or full control over their environments. Particularly, we propose a neutral agent-based approach across various task scenarios in multi-party open systems. While the neutral agents seemingly are detached from the victim agents, indirectly influence them through the shared environment. 
We evaluate our proposed method on the SMAC platform based on Starcraft II and the autonomous driving simulation platform Highway-env. The experimental results demonstrate that our method can launch general and effective adversarial attacks in multi-party open systems.

\end{abstract}

\begin{CCSXML}
<ccs2012>
 <concept>
  <concept_id>00000000.0000000.0000000</concept_id>
  <concept_desc>Do Not Use This Code, Generate the Correct Terms for Your Paper</concept_desc>
  <concept_significance>500</concept_significance>
 </concept>
 <concept>
  <concept_id>00000000.00000000.00000000</concept_id>
  <concept_desc>Do Not Use This Code, Generate the Correct Terms for Your Paper</concept_desc>
  <concept_significance>300</concept_significance>
 </concept>
 <concept>
  <concept_id>00000000.00000000.00000000</concept_id>
  <concept_desc>Do Not Use This Code, Generate the Correct Terms for Your Paper</concept_desc>
  <concept_significance>100</concept_significance>
 </concept>
 <concept>
  <concept_id>00000000.00000000.00000000</concept_id>
  <concept_desc>Do Not Use This Code, Generate the Correct Terms for Your Paper</concept_desc>
  <concept_significance>100</concept_significance>
 </concept>
</ccs2012>
\end{CCSXML}

\ccsdesc[500]{Machine Learning and Security~New Attacks on ML Systems; Robustness}

\keywords{Adversarial attacks, Deep Reinforcement Learning, Open Environments, Adversarial Policy Learning, Neutral Agents}

\received{20 February 2007}
\received[revised]{12 March 2009}
\received[accepted]{5 June 2009}

\maketitle

\section{Introduction}\label{section_intro}


Reinforcement learning (RL) is an important paradigm in machine learning for making a sequence of decisions under uncertainty. In this paradigm, a RL agent, as the decision-making entity, interacts with the environment through trial and error, learns to select actions based on observations and gradually adapts its policy that maximize cumulative rewards.
Recently, driven by advances in deep learning, deep reinforcement learning (DRL) has emerged by integrating deep neural networks (DNNs) into the RL framework. The integration has enabled DRL to achieve remarkable success across various domains, including strategic games  (e.g., AlphaGo~\cite{silver2016mastering}), robotics research ~\cite{andrychowicz2020learning, masmitja2023dynamic, radosavovic2024real}, autonomous driving~\cite{bojarski2016end, van2016coordinated, chen2023deep, chen2021automatic}, and training of large language models~\cite{guo2025deepseek}.

\eat{
Recognizing the promising results of DRL across various domains, it is essential to investigate its security and robustness. The integration of deep neural networks into RL introduces new vulnerabilities absent in traditional RL with handcrafted policies. DRL agents rely on learned representations, making them susceptible to small perturbations or adversarial manipulations in the environment. These manipulations, commonly referred to as adversarial attacks, deliberately exploit the sensitivity of DRL agents by injecting subtle changes into inputs, observations, or environment dynamics. Such attacks can disrupt the agent’s policy, leading to sub-optimal or even dangerous behavior. As a result, investigating adversarial attacks is crucial for improving the security and robustness of DRL. Existing adversarial attack techniques fall into two categories: 1) \textit{observation-based attack methods} manipulate the environment that an agent interacts with to influence its observation of the environment as well as its decision (action), thereby misleading the agent into behave abnormally; 2) \textit{policy-based attack methods} utilize an adversarial agent that directly interacts with the victim agent(s) to learn a policy model to mislead them to behave abnormally or make sub-optimal decisions in a two-agent competitive or multi-agent cooperative environments. This approach is commonly referred to as \textit{adversarial policy learning}.
}

Despite these advances, DRL inherits the vulnerability of DNNs to \textit{adversarial attacks}~\cite{carlini2017towards, goodfellow2014explaining, papernot2016limitations}, where carefully crafted perturbations can cause models to produce incorrect outputs.
This vulnerability has raised increasing concern from the security community.
Existing studies\cite{behzadan2017vulnerability, huang2017adversarial, zan2023adversarial, liu2023efficient,gleave2019adversarial, wu2021adversarial, guo2021adversarial} have demonstrated that adversarial attacks can induce DRL agents to behave sub-optimally or dangerously by introducing subtle perturbations to their inputs, observations, or environment dynamics.
These adversarial attack techniques often fall into two categories: (1)
\textit{environment manipulation-based methods} focus on directly manipulating the environment with which a victim agent interacts to perturb its observations and mislead the agent to behave abnormally~\cite{behzadan2017vulnerability, huang2017adversarial, zan2023adversarial, liu2023efficient};
(2) \textit{adversarial policy learning-based methods} employ a self-deployed adversarial agent that interacts with a victim agent to observe its behaviors, infer its policy, and learn an adversarial policy that guides the adversarial agent to take actions aimed at misleading the victim agent~\cite{gleave2019adversarial, wu2021adversarial, guo2021adversarial}. 

While these techniques have achieved promising results, little attention has been devoted to \textit{multi-party open systems}, which represent a class of particularly complex DRL task environments in practice, such as autonomous driving environments ~\cite{Wei2021Autonomous, Miao2025CCMA}, and complex strategy games (\eg Starcraft and Civilization) ~\cite{vinyals2019grandmaster, Meta2022Human}. 
Unlike the DRL task environments mainly focused in existing work (\eg single-agent operations ~\cite{bojarski2016end,masmitja2023dynamic}, two-agent competitions ~\cite{silver2018general}, multi-agent cooperation ~\cite{van2016coordinated, chen2023deep}), which typically restrict agents to a fixed number or a small number of (\ie 1$\sim$2) parties, the agents in multi-party open systems can be freely deployed, and be organized into one or more parties, each comprising at least one agent. 
Specifically, existing techniques face two main limitations in multi-party open systems:
\textit{(1) Impractical assumption of full control over the environment.} Environment manipulation-based methods assume full control over the environment for adversarial manipulation, which proves impractical given the excessive time and computational resources required to hack into the environment or a victim agent;
\textit{(2) Dependent on interactions with victim agents.} Adversarial policy learning-based methods often require adversarial agents to have interactions (\eg competition interactions ~\cite{gleave2019adversarial, wu2021adversarial, guo2021adversarial, ma2024sub} and cooperation interactions ~\cite{li2023attacking}) with victim agents in the same tasks to mislead them. However, in open multi-party open systems, adversarial agents cannot always have the opportunity to participate in the victim agents' tasks to have such interactions. 

\eat{
To enable open environment-oriented attacking, in this paper, \textbf{we aim to develop a neutral agent-based adversarial policy learning approach that can mislead well-trained victim agents without requiring direct interactions with them and full control over the environment.} More specifically, we first equip the neutral agents with the ability to observe local actions and related task results of the victim agents while acting as bystanders. By observing these actions and results, the neutral agents can easily figure out what bystander actions can influence the victim agents. 
Second, under the guidance of the victim’s actions and results, the neutral agents subtly varies their actions. By doing this, the neutral agents can trick a well-trained victim agent into taking
sub-optimal actions. To the best of our knowledge, our method is the first general adversarial policy learning approach that leverages self-deployed neutral agents for attacking DRL in open environments.
}

\noindent\textbf{Our solution.} In this paper, \textbf{we propose a neutral agent-based adversarial policy learning approach that misleads well-trained victim agents in multi-party open systems without requiring direct interactions with these agents and full control over the systems.} 
By carefully examining various multi-party open systems~\cite{Wei2021Autonomous, Miao2025CCMA, vinyals2019grandmaster, Meta2022Human}, we observe that agents in these systems can be deployed in neutral roles, which do not participate in any interactions with other agents. 
Moreover, we observe a neutral agent functions like a bystander that, while having no direct interactions with other agents, can observe these agents and subtly adjusts its actions to indirectly influence them through the shared environment.
Based on these observations, more specifically, we train neutral agents to learn adversarial policies (\ie repurpose neutral agents as adversarial agents) by first designing an appropriate reward to guide policy optimization, and then developing an efficient computation method to perform this optimization.
At a high level, our method extends adversarial policy learning to neutral agents that do not directly interact with victim agents. This method naturally inherits a key advantage of existing adversarial policy learning methods: it does not require full control over task environments.
To the best of our knowledge, our approach is the first to leverage neutral agent–based adversarial policy learning to attack DRL in multi-party open systems. 
Notably, although our method is designed for multi-party open systems, it is also applicable to other task environments, including single-agent operations~\cite{bojarski2016end, masmitja2023dynamic}, two-agent competitions~\cite{silver2018general}, and multi-agent cooperation~\cite{van2016coordinated, chen2023deep}, as they can be considered special cases of multi-party open systems.

 
\noindent\textbf{Challenges.} Although leveraging neutral agents for adversarial policy learning holds promise for attacking DRL in multi-party open systems,  the unique characteristics of such systems and their internal neutral agents pose two major challenges to its effective implementation.

\eat{
Technically, with the observed local actions and task results of the victim agents, our attack method extends the QMIX algorithm by reformulating its objective function (i.e., state-action value function) and reward function. As we will detail in Section 4, the reformulated objective function yields an approximately equivalent solution compared to the traditional objective function, which indicates how much the victim agents are influenced by the adversarial agents. Particularly, we introduce \textit{victim damage} and \textit{task completion delay} terms into the objective function, which respectively measures the victim damage and task completion delay status of the victim agents with and without the influence of our adversarial agents. By maximizing the action deviation and the task completion delay in reformulated objective function, we can train one or a group of adversarial agent(s) to take the action that expand the damage of victim agents and delay the process of their task to the utmost.


Moreover, as we will also specify in Section 4, the reformulated reward function is designed to support the realization of the aforementioned approximately equivalent solution by assigning positive rewards to adversarial actions that cause victim damage and task completion delay. Particularly, in Section 5, we introduce an \textit{estimation-based reward function/model}, which estimates the reward for adversarial agents based on partial observations using DNNs and updates it with the final results of victim agents and their task. The reward function does not assume that adversarial agents have access to the detailed status of each victim agent and their tasks through the global state during reward computation, and also improves the  efficiency of our adversarial policy training.
}

\noindent\ding{182}
In adversary attacks against DRL, the ultimate objective of an adversary is to induce task failures in victim agents~\cite{gleave2019adversarial}, which can be realized by designing adversary rewards that incentivize actions leading to such failures.
Existing adversary reward designs\cite{gleave2019adversarial, wu2021adversarial, guo2021adversarial} typically focus on zero-sum RL tasks by simply taking the negative of the victim agents’ rewards. However, in multi-party open systems that often involve non-zero-sum RL tasks~\cite{Wei2021Autonomous, Miao2025CCMA, vinyals2019grandmaster, Meta2022Human}, the rewards of the victim agents are often private, making it challenging for neutral agents to access them.

To address this challenge, we design a novel adversary reward by leveraging the failure paths of victim agents. Fundamentally, an important paradigm of reward design is to formulate metrics that effectively measure the performance of RL tasks to achieve specific objectives. In this regard, failure paths, \ie ways abstracted from observable state–action sequences that culminate in unsuccessful task completion, provide signals that guide adversarial behavior without requiring direct access to the victim’s private rewards. Particularly, we introduce two common failure paths, which involve victim damage and task delay. The first measures the potential harm suffered by victim agents, while the second measures the potential obstacles encountered by the victim tasks, both accounting for the influence of adversarial agents at each step. Nevertheless, our reward design is extensible, allowing additional failure paths to be extracted from specific tasks and incorporated to further guide adversarial policy learning (see Section \ref{section_tech_overview}).

\noindent\ding{183}
After reward design, the computation of reward must be carefully specified to enable effective policy optimization by converting observed sequences of states and actions into quantitative step-wise signals that evaluate progress toward task objectives. Existing adversary reward computation methods\cite{guo2021adversarial} often assume access to the global state, that is, complete information about the environment and all agents, to ensure accurate and consistent reward estimation at each step. However, in multi-party open systems, such global state is typically unavailable due to factors such as perception distance limitation and region limitation~\cite{ma2024sub}, making precise step-wise reward computation difficult.

To address this challenge, we propose an estimation-based reward calculation model that leverages LSTMs to estimate the adversarial reward based on partial observations. Although adversarial agents do not have access to the complete global state, partial observations still provide task-relevant information, such as the adversarial agent’s own state and nearby environmental cues. By modeling sequences of these partial observations, LSTMs can capture temporal dependencies and accumulate information over time, effectively approximating the missing components of the global state. With sufficient training data linking partial observation sequences to overall task outcomes, the LSTM can learn an implicit mapping between partial observations and step-wise reward signals, providing informative feedback to guide adversarial policy optimization even under partial observability (see Section \ref{section_reward_shaping}).

\textbf{Evaluation.} 
Our method was evaluated on the Starcraft Multi-agent Challenge (SMAC) ~\cite{samvelyan2019starcraft} platform based on Starcraft \uppercase\expandafter{\romannumeral 2} ~\cite{SC2} and the autonomous driving simulation platform Highway-env ~\cite{highway-env}, which are both widely adopted for RL algorithm evaluations. We first evaluate the generalization effectiveness of our method across various task settings in multi-party open systems
. The experimental results demonstrate that our method is capable of launching generalizable adversarial attacks across these diverse task settings, resulting in respective reductions in winning rate of 96\%, 90\%, 87\%, 96\% on the corresponding Starcraft \uppercase\expandafter{\romannumeral 2} map "1m", "1c\_vs\_30zg", "8m", "MMM" and 80\%, 40\% in Highway-env scenario "highway", "intersection". 
Then, we compared the effectiveness and efficiency of our proposed estimation-based reward model with that of the traditional reward models. The experimental results show that our method outperforms the traditional model in terms of reducing the winning rate of the same well-trained victim agents on the same maps, achieving an average decline of 80\% vs. 20\%, 80\% vs. 15\%, 90\% vs. 90\% in map "8m", "MMM",  "6h\_vs\_8z", and 90\% vs. 25\%, 45\% vs. 30\% in scenario "highway" and "intersection". In terms of training efficiency, our reward model achieves significantly faster convergence, requiring only 2 million episodes compared to 18 million episodes for the traditional reward model on the "8m" map with two adversarial agents.
Next, we explored the effectiveness of deploying different numbers of adversarial agents across tasks of various difficulty levels. The results indicate that the more adversaries and the more difficult of victim tasks, the easier our attacks become effective. Finally, we also demonstrate that our attacks cannot be defended against by existing techniques through a few experiments. 

In summary, we make the following contributions:
\begin{itemize}[left=0pt]
\item  To the best of our knowledge, we are the first effort to attack DRL in multi-party open systems through adversarial policy learning.
\item  We propose a neutral agent-based adversarial policy learning approach to mislead well-trained victim agents without requiring direct interactions with them and full control over the environment.
\item  To implement our neutral agent-based approach, we redesign the reward functions by leveraging different failure paths. 
\item  We propose an estimation-based reward model to calculate reward without global states in each step. 
\item We evaluate the effectiveness and efficiency of our method on the SMAC platform and the autonomous driving simulation platform Highway-env.
\end{itemize}





\section{Problem Statement and Assumption}\label{section_problem}

\subsection{Problem statement}\label{subsec_problem}


\begin{figure*}[ht!]
\centering
\subfloat[Single-agent environment]{\includegraphics[width=0.51\linewidth]{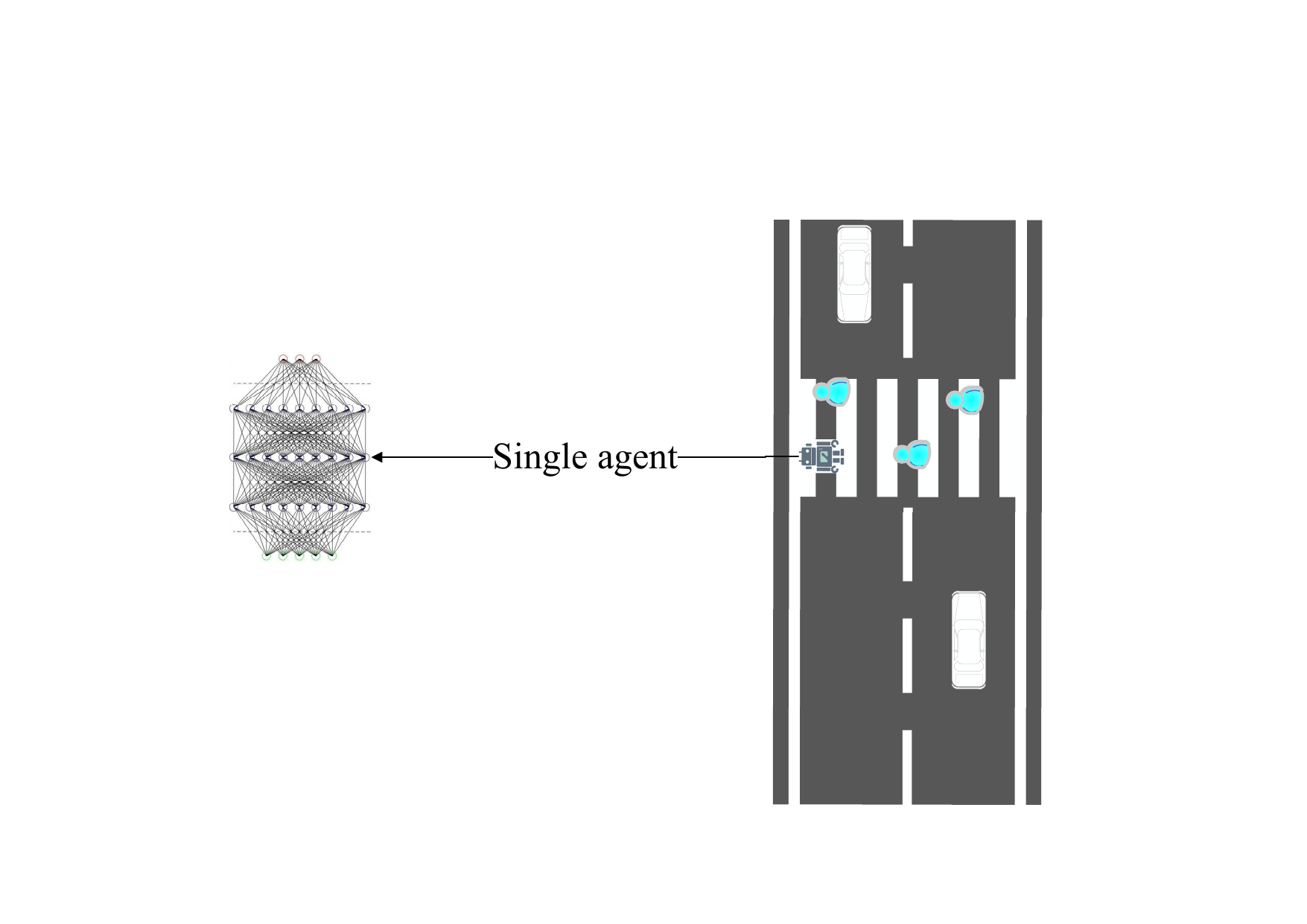}\label{fig1a}}
\subfloat[Two-agent competitive environment]{\includegraphics[width=0.51\linewidth]{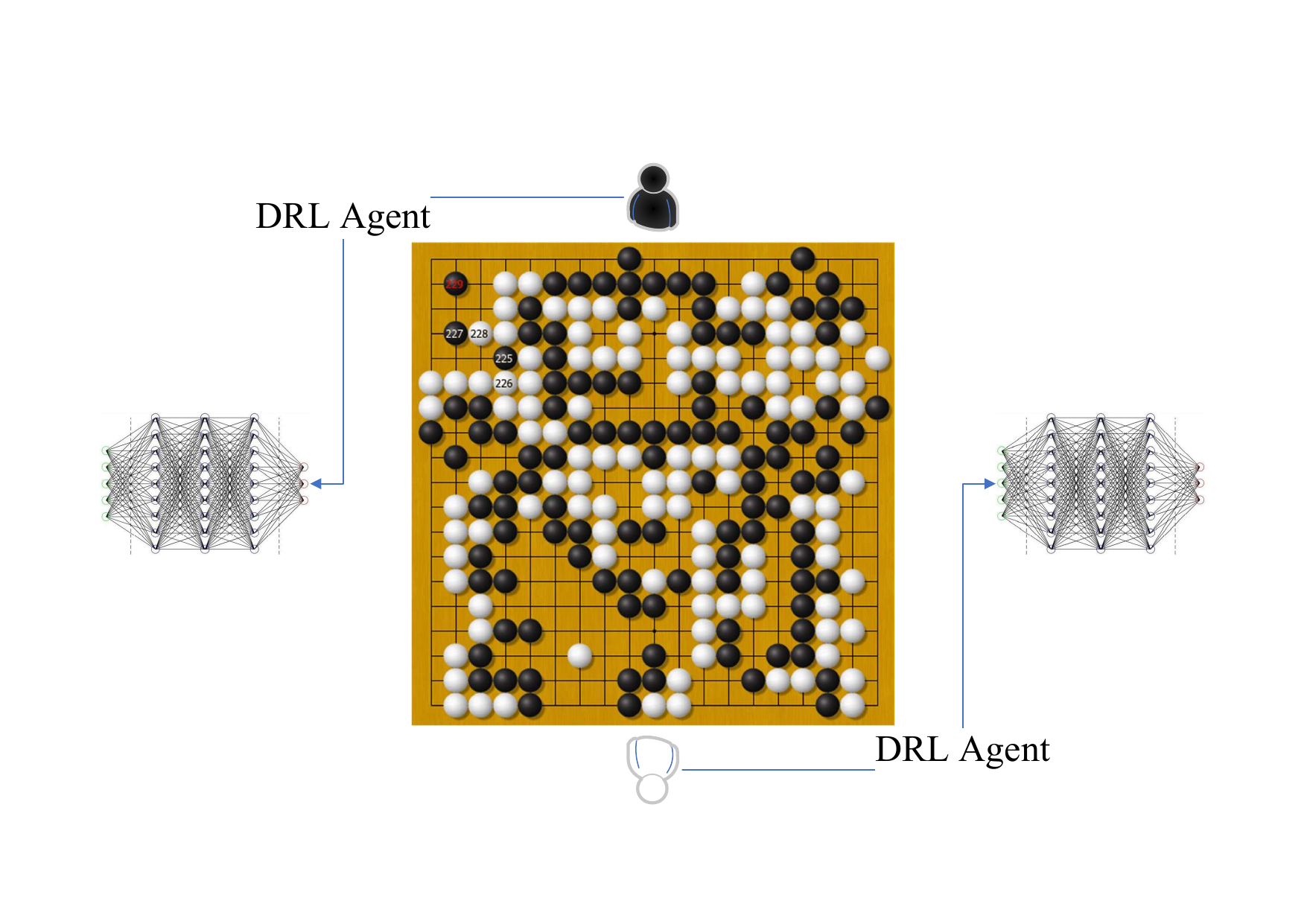}\label{fig1b}}
\vspace{-0.5cm}
\\
\subfloat[Multi-agent cooperative environment]{\includegraphics[width=0.51\linewidth]{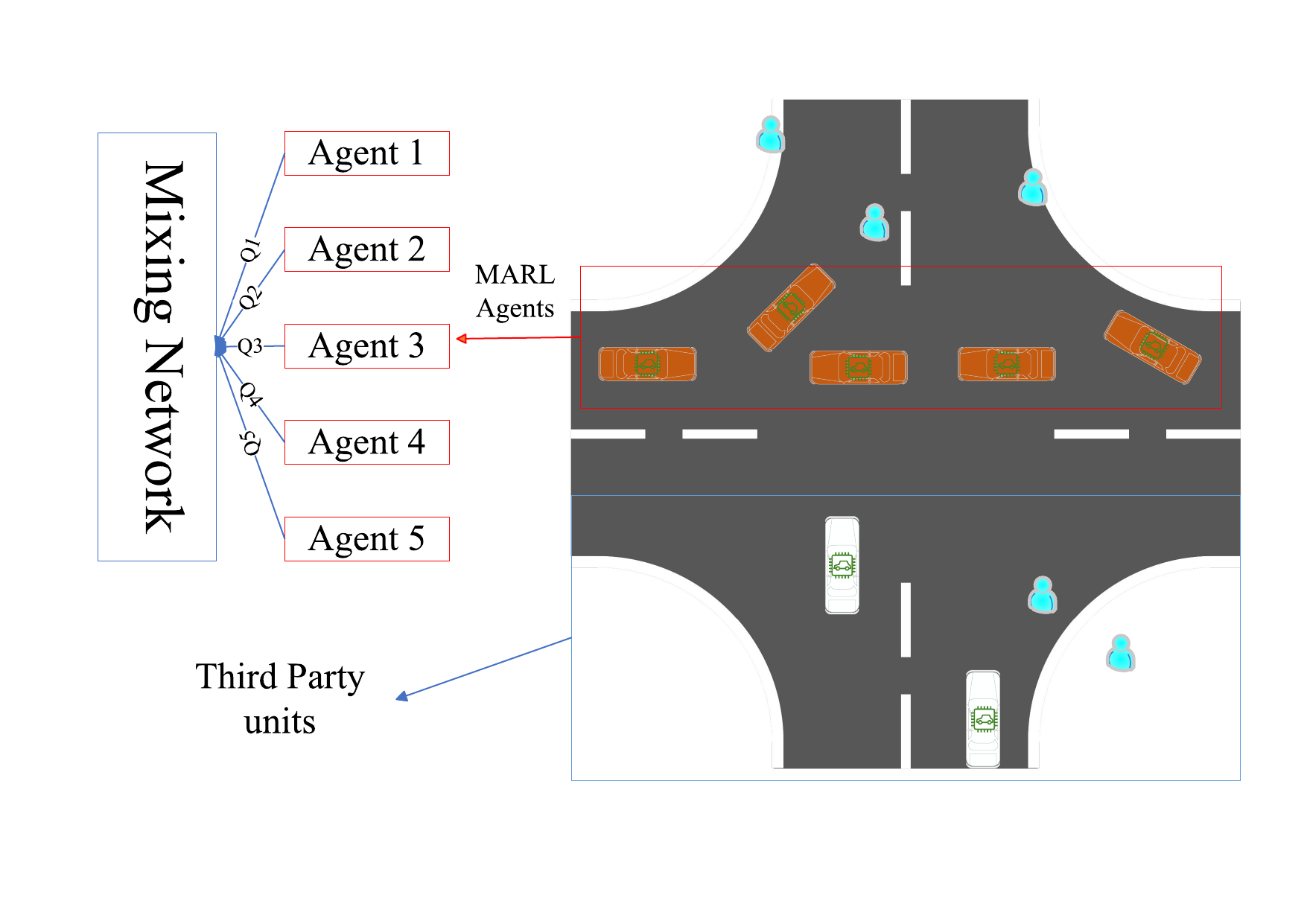}\label{fig1c}}
\subfloat[multi-party open system]{\includegraphics[width=0.51\linewidth]{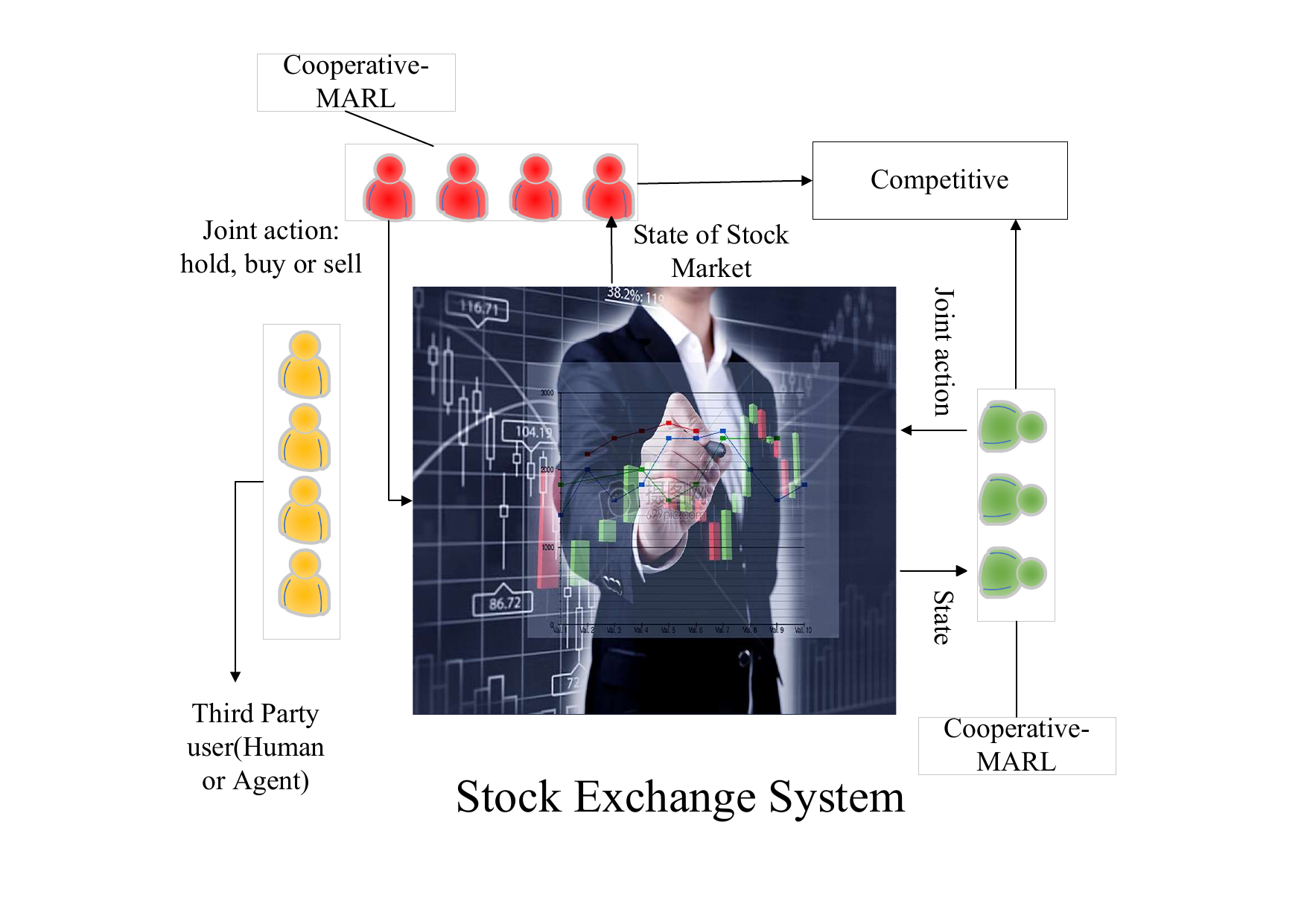}\label{fig1d}}
\caption{Categories of Reinforcement Learning Task Environments}
\label{fig:1}
\end{figure*}

Reinforcement learning refers to a type of algorithm capable of making optimal decisions in complex environmental changes to achieve target tasks. As shown in Figure \ref{fig:1}, in the reinforcement learning task environment, each RL agent can interact with the environment by observing its state and taking actions to update the state of the environment, and then receive a reward signal and the new observed state from the environment. 

In reinforcement learning, each agent ultimately aims to learn an optimal policy that guides all its actions to maximize the reward signals obtained from the environment, thereby enhancing task completion efficiency. In deep reinforcement learning, this optimal policy is typically derived from a well-designed deep neural network. The input of this neural network consists of the state information observed by the agent within the environment, while its output represents the probability distribution of actions the agent will take in that particular step.

However, the objectives for training optimal policies vary across different task environments. As illustrated in Figure \ref{fig:1}, reinforcement learning tasks can be categorized based on the number and relationships of agents. In single-agent tasks (shown in Figure \ref{fig1a}), only one agent interacts with the environment, aiming to accomplish a specific task, such as generating the thought process and response to a question. In two-agent competitive environments (shown in Figure \ref{fig1b}), two agents interact with the environment, with the goal of eliminating each other or contesting for victory. In multi-agent cooperative environments (shown in Figure \ref{fig1c}), multiple agents collaborate to interact with the environment, jointly fulfilling a target task through well-trained cooperation, such as in intelligent transportation systems. In multi-party open systems (shown in Figure \ref{fig1d}), multiple parties of agents, each consisting of at least one agent, interact with the environment. These groups can either collaborate, compete, or remain neutral.

In general, a two-agent competitive environment can be modeled as a single-agent environment ~\cite{gleave2019adversarial, wu2021adversarial, guo2021adversarial}, and both of them or a multi-agent cooperative environment can be considered as partial scenarios in multi-party open systems. That means if there is a way to effectively attack a designated party of agents to prevent them from achieving their goals by deploying a party of adversarial agents in the system with a neutral position from the attacker, there will be the same way in single-agent and two-agent competitive environments. In this research, therefore, we focus on a more general problem compared with previous research on adversarial policy, to develop a method that can attack any party of reinforcement learning agents in any circumstances in Figure 1. To be more specific, as shown in Figure \ref{fig:1}, in this work, we fix every agent in a multi-party open system except the adversarial party, and train an adversarial multi-agent party to collaboratively attack a designated party standing on the neutral position. 

\subsection{Assumption}\label{subsec_assumption}

Comparing with existing attacks on DRL, the changes in assumptions in our work are listed as follows: 

\begin{itemize}
    \item We assume only adversarial party agents adapt their policy in a multi-party open system immediately.
    \item This work does not assume that we can manipulate the environment or any agents of the victim party or those not belonging to the attacker.
    \item we assume that attacks occur only in open environments that allow deploying neutral agents at any time without directly participating in task of victim agents.
    \item We do not assume during every steps in an episode, adversarial agents and victim agents share global observations.
\end{itemize}

The further detailed discussion of assumptions can be seen in Appendix \ref{AP7}.

\subsection{Threat Model}

As discussed above, we describe the threat model from the perspectives of the envisioned attacker, threat surface, generality, and practicality. 

\textbf{Envisioned attacker}. We consider that the attacker aims at DRL applications such as autonomous driving, robots, and cyber security. Therefore, an attacker is supposed to be familiar with the DRL algorithm and the tasks listed above. The attacker may desire to fail the DRL tasks and benefit from it. For example, the attacker may desire to cause multiple autonomous driving crashes of a specific brand to achieve the purpose of commercial competition. 

\textbf{Threat surface}. Our attack can be readily deployed in any environment that is open or allows for the free deployment of neutral agents. Any task scenario within the environment can serve as an attack target, with no restrictions on the number or relationships of the target agents. 

\textbf{Generality}.Our proposed attack focuses on different types of DRL algorithms among single and multiple agents, value-based and policy optimization algorithms. Besides, as mentioned in Section \ref{subsec_problem}, the proposed method can attack different categories of DRL task environments. 

\textbf{Practicality}. As discussed in Section \ref{subsec_assumption}, to improve practical applicability, we have removed several assumptions, including environmental manipulation, global state observability, and direct participation in the victim's task execution. 

\section{Background}\label{section_background}

Recent proposed DRL methods can be categorized into Q-learning based algorithms (e.g. ~\cite{van2016deep, mnih2015human, rashid2020monotonic}) and policy optimization algorithms (e.g. ~\cite{schulman2017proximal, yu2022surprising}). Given that our proposed attack method does not restrict the quantity of adversarial agents, we model our problem as a Multi-Agent Reinforcement Learning(MARL) task. Among these recent RL methods above, QMIX is one of the best-performance and widely used algorithms in solving MARL tasks. Therefore, in this work, we take QMIX as an example to show our proposed method in Section \ref{section_tech_overview}, while the method is also available for other algorithm frameworks such as VDN and MAPPO. In this section, we first summarize the main algorithms of DRL and MARL, then briefly show how to model a MARL problem formally, and finally introduce the QMIX structure. 

\subsection{Deep Reinforcement Learning}

Deep Reinforcement Learning (DRL) represents a significant paradigm shift within the broader field of Reinforcement Learning, fundamentally distinguished by its integration of deep neural networks as powerful, high-capacity function approximators. Whereas traditional Reinforcement Learning approaches typically rely on explicitly designed, often linear or tabular, methods (such as state aggregation, tile coding, or linear value function approximation) to handle the value function or policy representation, DRL leverages the representational power of deep learning to automatically discover intricate hierarchical features directly from high-dimensional, raw sensory inputs, such as pixels in images or complex sensor streams. DRL architectures employ deep neural networks as universal nonlinear function approximators, which enables agents to learn abstract representations end-to-end, effectively scaling RL to previously intractable domains with high-dimensional perceptual inputs (e.g., playing Atari games from pixels ~\cite{mnih2015human, mnih2013playing}, robotic control from vision ~\cite{andrychowicz2020learning, masmitja2023dynamic, radosavovic2024real}, complex strategy games like Go ~\cite{silver2016mastering}). Recall that we will take QMIX as the example in Section \ref{section_tech_overview}, following we thus focus on the introduction of Q-learning based algorithms. 

\textbf{Q-learning based algorithms}. Q-learning-based algorithms represent a prominent approach in deep reinforcement learning (DRL), addressing sequential decision-making problems through value function approximation. Rooted in the principles of temporal difference (TD) learning, these methods estimate action-value functions \(Q(s,a)\) to determine optimal policies by maximizing expected cumulative rewards. Classical Q-learning employs a tabular representation to iteratively update Q-values through the Bellman equation:
\begin{equation}
  Q(s,a)=R(s,a)+\gamma\sum_{s'}P(s'|s,a)V_\pi(s'), \label{s3_1}
\end{equation}
where \(s,a,s'\) represent the current state, the action taken by the policy, and the state of the next step, respectively. \(V_\pi(s')\) is the bellman equation of state \(s'\), denoted as 

\begin{equation}
  V_\pi(s') = \sum_a \pi(a|s')\sum_{s''}p(s''|s',a)[R(s', a)+\gamma V_\pi(s'')].
  \label{s3_1_1}
\end{equation}

However, the advent of deep neural networks has led to pivotal advancements through Deep Q-Networks (DQN), which parameterize Q-functions via deep learning architectures to handle high-dimensional state spaces. 


\subsection{Multi-Agent Reinforcement Learning}

Multi-agent reinforcement learning (MARL) extends traditional reinforcement learning paradigms to settings where multiple autonomous agents interact within a shared environment, necessitating coordination, competition, or hybrid objectives. Unlike single-agent systems, MARL addresses unique challenges arising from nonstationarity - where an agent's optimal policy depends on the evolving behaviors of other agents - and partial observability, often requiring decentralized decision-making under imperfect information. 

\subsubsection{Modeling a MARL problem}

Given a MARL task with continuous action space, it is usual to model the task as a Decentralized Partially Observable Markov Decision Process (Dec-POMDP), which contains the following components:


\begin{itemize}[left=0pt]
\item  a finite set of agents 
\begin{math}
    N = \{1,\dots,n\}
\end{math} , each of agent follows an independent policy. 
\item a finite set of individual state \(S_i\) for each agent 
\begin{math}
    i \in N
\end{math}. In each \(S_i\) includes an state \begin{math}s(i, t)\end{math}, where each \begin{math}s(i, t)\end{math} represents the state of agent \begin{math}i\end{math} in time \begin{math}t\end{math}. Global states can be described with all of individual state \begin{math}S_i\end{math}
\item a finite joint action set \begin{math}A\end{math}, where each joint action \begin{math}A_t\end{math} refers to the joint action in time \begin{math}t\end{math}. Each joint action is composed of \begin{math}a(i, t)\end{math} for each agent \begin{math}i \in N\end{math}.
\item a global state transition function \begin{math}P : S \times A\rightarrow S\end{math}, where \begin{math}P(s’| s, a)\end{math} denotes the probability that the global state s transits to \begin{math} s '\end{math} by taking joint action \begin{math}a\end{math}. 
\item a reward function \begin{math}R_i : S \times A_i \rightarrow R\end{math}, where \begin{math}r(i, s, a)\end{math} indicates the expected reward that agent i will receive after taking action a at state s. 
\item a discounted rate \begin{math}\gamma\in[0,1]\end{math}, which is usually multiplied by future reward.
\item a finite set of policies \begin{math}\pi_i : S_i \rightarrow A_i\end{math} for each agent \begin{math}i \in N\end{math}, where \begin{math}\pi_i(a_i|s_i)\end{math} refers to the probability distribution of action taken by agent \begin{math}i\end{math} at state \begin{math}s_i\end{math}.
\end{itemize}
The final target of MARL is to learn an optimal set of policies \begin{math}\pi_i(a_i|s_i)\end{math} for each agent \begin{math}i \in N\end{math} that could maximize the expectation of the state value function \begin{math}V_\pi^{tot}(s)\end{math} or state-action value function \begin{math}Q_\pi^{tot}(s,a)\end{math} over a sequence of actions generated through the policy.

\subsubsection{QMIX}

QMIX is a commonly used algorithm in current MARL tasks for addressing multi-agent reward allocation problems. It proposes a decentralized greedy strategy to ensure that globally optimal joint action A is equivalent to the combination of each optimal action \begin{math}a_i\end{math} of agent \begin{math}i \in N\end{math} individually:
\begin{equation}
  argmax_AQ^{tot}(s,A)=\left(
  \begin{aligned}
      &argmax_{a_1}Q_1(s_1,a_1)\\
      &argmax_{a_2}Q_2(s_2,a_2)\\
      &...\\
      &argmax_{a_n}Q_n(s_n,a_n)
  \end{aligned}
  \right).\label{s3_2}
\end{equation}

QMIX transforms Equation (\ref{s3_2}) into the monotone constraint for each \begin{math}Q_i\end{math} using a mixing network. Equation (\ref{s3_2}) holds only if the following monotonicity is satisfied: 

\begin{equation}
  \frac{\partial Q^{tot}}{\partial Q_i}\geq 0\label{s3_3}
\end{equation}

In the structure of QMIX, each agent holds an independent deep Q network to calculate the independent Q value using the mixing network \begin{math} F\end{math} with Equation (\ref{s3_1}):

\begin{equation}
  Q^{tot}=F(Q_1,Q_2,...,Q_n)\label{s3_4}
\end{equation}

Similarly to DQN, QMIX trains an end-to-end model with the following loss function under batch size b:

\begin{equation}
  \begin{aligned}
      &L(\theta) = \sum_{i=1}^{b}[(y_i^{tot}-Q^{tot}(s,a;\theta))^2]\\
      &y^{tot}=R^{tot}+\gamma _{max_{a'}}Q^{tot}(s',a';\theta^-))
  \end{aligned}\label{s3_5}
\end{equation}

\section{Technique Overview}\label{section_tech_overview}

Recall that we attack a set of well-trained victim agents by training a set of neutral agents. To achieve this, as discussed in Section \ref{section_problem}, we do not assume the attacker has access to the models of victim agents (including observation, action, reward function and other module of victim agents) nor the global state at each step in the training phase. Rather, we assume that the results and status of all victim agents are available at the end of one episode. In this section, we first display the reward function design of our method. Then, we briefly specify how to build the objective function with new designed reward function to extend a MARL algorithm and thus implement our attack method at a high level. 

\begin{figure*}
\centering
\includegraphics[width=1\linewidth]{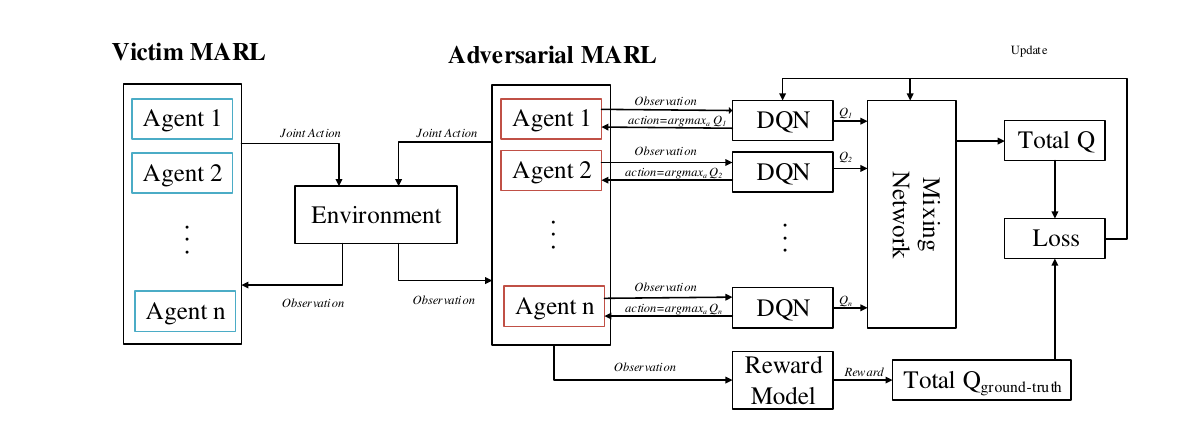}
\caption{The algorithm framework of our proposed method}
\label{fig:2}
\end{figure*}

\subsection{Problem Definition}

Following early research on MARL ~\cite{rashid2020monotonic} mentioned in \ref{section_background}, we also formulate a multi-party open system as a Dec-POMDP, represented by \(M=<(\mathcal{N_\alpha, N_v}), \mathcal{S}, (\mathcal{A_\alpha, A_v}), \mathcal{P}, (\mathcal{R_\alpha, R_v}), \mathcal{\gamma}>\). Here, \(\mathcal{N_\alpha}\) and \(\mathcal{N_v}\) refer to the agent set of adversaries and victims separately. \(\mathcal{S}\) denotes the global state set. \(\mathcal{A_\alpha}\) and \(\mathcal{A_v}\) are the joint action sets for adversarial agents and victim agents, respectively. \(\mathcal{P}\) represents a joint state transition function \(\mathcal{P}: \mathcal{S} \times \mathcal{A_\alpha} \times \mathcal{A_v} \xrightarrow{} \Delta (\mathcal{S})\). As mentioned in Section \ref{section_background}, the state transition is a stochastic process, thus we use \(\Delta (\mathcal{S})\) to denote a probability distribution on \(\mathcal{S}\). The reward function can be represented as \(\mathcal{R_i}: \mathcal{S} \times \mathcal{A_\alpha} \times \mathcal{A_v} \xrightarrow{} \mathbb{R}; i \in \{\alpha, v\}\). 

In this paper, following previous research on adversarial policy learning ~\cite{guo2021adversarial}, we assume that agents except adversaries follow fixed policies. Holding this assumption, our problem can be viewed as a single-party Dec-POMDP for adversarial agents, denoted by \(M_\alpha{}=<\mathcal{N_\alpha}, \mathcal{S}, \mathcal{A_\alpha}, \mathcal{P_\alpha}, \mathcal{R_\alpha}, \mathcal{\gamma}>\). Note that the state transition function \(\mathcal{P_\alpha}\) and global state \(\mathcal{S}\) here are not available in explicit form. Instead, each of the agents can get only its own observation \(s_i; i \in \{1,2,\dots , n\}\) where \(n\) represents the total number of all agents in the environment. 

\subsection{Reward function design}

Defining and calculating the reward for adversarial agents presents a significant challenge in our work. Previous studies on adversarial policy training define the reward for adversarial agents as the gain they achieve in the task environment ~\cite{gleave2019adversarial} (in a zero-sum competition) or as their own gain minus the reward of victims ~\cite{guo2021adversarial} (in non zero-sum competition). Under zero-sum conditions, the former and latter definitions are equivalent.

Admittedly, it is simple to use a direct reward function such as the negative of the victim agents' reward. However, as mentioned in Section \ref{section_problem}, we do not assume the victim agents' reward design is available for the attacker. As such, we design a different and universal reward model to fulfill our objective as follows. Recall that no matter attacking single agent RL tasks or multi-agent RL tasks, the final target of adversarial agents is to mislead victims to a failure ending, which can be caused by different ways in different tasks. Therefore, as a specific example displayed in Appendix \ref{AP5}, to fail a set of well-trained victim agents, we redesign the adversarial reward function by further exploring as many paths that lead victims to failure as we can. Mathematically, the newly designed reward model can be written as:

\begin{equation}
  R_{j}^{i} = \{R_1^i, R_2^i, ..., R_n^i\}, \label{s4_1}
\end{equation}

\noindent where \(i, j\) refer separately to the adversarial agent \(i\) and the failure path \(j\). With this practice, from the victim agents' viewpoints, they will be misguided to suboptimal decisions and thus reduce the task success rate. 

Considering the distinction of different failure paths, we further explore how much impact can be caused by each failure path. With different impacts on victim tasks, we manually define a weight vector, measuring the importance and effectiveness of each corresponding failure path: 

\begin{equation}
  W_{j} = \{W_1, W_2, ..., W_n\}, \label{s4_2}
\end{equation}

\noindent where \(j\) refers to the failure path \(j\). With this weight vector, the reward function of the adversarial agent \(i\) can be formally rewritten as:

\begin{equation}
  R_{\alpha}^{i} = W \times (R^i)^\top, \label{s4_3}
\end{equation}

\noindent where \(R_{j}^{i}\) and \(W_{j}^{i}\) are defined by Equations (\ref{s4_1}) and (\ref{s4_2}). Note that the weight vector varies between different tasks, while it is simple to make a quick configuration before facing a specific task. Similarly, the total reward of adversaries can be defined as follows: 

\begin{equation}
  R_{w}^{tot} = W \times (R^{tot})^\top, \label{s4_3_1}
\end{equation}

\noindent where \(R^{tot}\) is a vector defined as \(R^{tot}_{j} = \{R^{tot}_{1}, R^{tot}_{2}, \dots, R^{tot}_{n}\}\).

\subsection{Objective function Building}

As introduced above, we design a new reward function to measure the effectiveness of adversaries in each step. However, a short-term feedback signal is not enough for an agent, thus we further extend the existing RL algorithm by building an objective function to provide long-horizon measurement for adversaries with a newly designed reward. In previous RL research, it is common to build an objective function with the Value function ~\cite{sunehag2017value} or Q-value function ~\cite{rashid2020monotonic}. Considering the popularity and effectiveness of the QMIX algorithm in MARL ~\cite{samvelyan2019starcraft}, we take the QMIX algorithm as an example to show how to build an objective function based on the Q-value function and train adversarial agents with the newly designed reward function as follows. 

As introduced in Section \ref{section_background}, the objective of QMIX algorithms is to maximize the expected return calculated by the total Q-value function consisting of the individual Q-value of each agent. Therefore, the objective function of our proposed algorithm can be represented as:

\begin{equation}
  J(\theta)=maximum_{A{^{adv}}} Q_{adv}^{tot}(S,A). \label{s4_4}
\end{equation}

To maximize the total Q-value function, the independent Q-value should be defined first following Equation (\ref{s3_1}) in DQN reformulated by our newly designed reward function:

\begin{equation}
  Q^{\pi}_i(S,A_i)=R_i^\alpha(S,A_i)+\gamma \sum _{S'}P(S'|S,A_i)V^{\pi}_i(S').
  \label{s4_5}
\end{equation}

Here, \(\pi\) refers to the joint policy of adversaries, victims, and other potential agents deployed in the environment. \(R_i^\alpha(S,A_i)\) follows Equation (\ref{s4_3}) where \(i\) indicates the index of each adversarial agent. As mentioned in Section \ref{section_problem}, we assume only adversarial agents adapt their policies in our proposed attack. Under this setup, we have the following proposition (see proof in Appendix \ref{AP1}). 

\begin{proposition}
    \label{prop_1}
    In a multi-party open system, if all agents follow fixed policies except agents of one specific party, the state transition of the environment system will depend only upon the joint policy of agents belonged to this specific party rather than the joint policy of all agents in the system.
\end{proposition}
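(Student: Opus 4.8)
The plan is to show that fixing the policies of every agent outside the distinguished party collapses the multi-party transition kernel into an \emph{induced} single-party kernel whose only free argument is that party's joint action, and hence whose only functional dependence on the policy profile is through that party's joint policy. Concretely, write the distinguished party as $\mathcal{N_\alpha}$ and let $a_{-\alpha}$ denote the joint action of all remaining agents (drawn from any number of other parties). The full dynamics are governed by $\mathcal{P}(s' \mid s, a_\alpha, a_{-\alpha})$, which a priori depends on the actions---and therefore the policies---of every agent. The whole argument then reduces to marginalising $a_{-\alpha}$ out of this kernel.

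First I would record that, because the remaining agents follow fixed policies, the conditional law of $a_{-\alpha}$ given the global state $s$ is itself fixed. In a Dec-POMDP each such agent $i$ selects its action from $\pi_i(\,\cdot \mid o_i)$, where its local observation $o_i$ is a fixed (deterministic) function of the global state; consequently $\Pr[a_{-\alpha}\mid s] = \prod_{i \notin \mathcal{N_\alpha}} \pi_i\bigl(a_i \mid o_i(s)\bigr)$ becomes a constant map of $s$ once the $\pi_i$ are frozen. I would then define the induced kernel by summing over these companions,
\begin{equation}
  \mathcal{P_\alpha}(s' \mid s, a_\alpha) = \sum_{a_{-\alpha}} \mathcal{P}(s' \mid s, a_\alpha, a_{-\alpha}) \prod_{i \notin \mathcal{N_\alpha}} \pi_i\bigl(a_i \mid o_i(s)\bigr),
  \label{eq:induced_kernel}
\end{equation}
and verify it is a well-defined probability distribution on $\mathcal{S}$ (non-negativity is immediate, and the masses sum to one since $\mathcal{P}(\cdot\mid s,a_\alpha,a_{-\alpha})$ and each $\pi_i$ do). The right-hand side of \eqref{eq:induced_kernel} contains no free policy parameter other than through $a_\alpha$, so the state transition---now a Markov kernel from $(s,a_\alpha)$ to $s'$---is determined solely by the distinguished party's action, which is in turn governed only by that party's joint policy. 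This is exactly the assertion.

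The step I expect to be the main obstacle is justifying the factorisation and, more importantly, the partial-observability bookkeeping in \eqref{eq:induced_kernel}. Since the system is modelled as a Dec-POMDP, the fixed agents generally act on local observation histories rather than on $s$ directly, so I must argue that freezing their policies still pins down a stationary, state-indexed conditional law over $a_{-\alpha}$---i.e., that the induced process remains Markov from the distinguished party's viewpoint. I would handle this by noting that each observation (or history) map is itself a fixed function of the realised trajectory, so conditioning on the current global state already absorbs all the randomness the frozen agents contribute, leaving no residual dependence on the distinguished party's policy. If one prefers not to assume conditional independence of the companions' actions, I would simply replace the product in \eqref{eq:induced_kernel} by their fixed joint conditional law $\Pr[a_{-\alpha}\mid s]$, and the conclusion is unchanged.
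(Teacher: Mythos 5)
Your proof is correct, but it takes a genuinely different route from the paper's. You marginalize the frozen agents' actions out of the full kernel, constructing an induced single-party kernel $\mathcal{P}_\alpha(s'\mid s,a_\alpha)=\sum_{a_{-\alpha}}\mathcal{P}(s'\mid s,a_\alpha,a_{-\alpha})\Pr[a_{-\alpha}\mid s]$ in which the fixed conditional law $\Pr[a_{-\alpha}\mid s]$ carries no free policy parameters, so the transition kernel seen by the distinguished party depends on the policy profile only through that party's own actions; the proposition then reads off directly. The paper instead keeps all actions explicit: it factorizes the joint law $P(S_{t+1},A_t^\alpha,A_t^v,A_t^\tau\mid S_t)$ into the environment kernel times the product of per-party policies, absorbs the fixed victim and third-party factors into a multiplicative constant $c=\pi^v(A_t^v\mid S_t)\pi^\tau(A_t^\tau\mid S_t)$, and then argues via trajectory probabilities that the adversaries' value and Q-functions change only through $\pi^\alpha$. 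Your version is the cleaner and more literal proof of the claim as stated, and it explicitly constructs the reduced kernel $\mathcal{P_\alpha}$ that the paper's single-party Dec-POMDP $M_\alpha$ presupposes but never writes down; the paper's version, while less direct about the kernel, delivers exactly the downstream consequence it needs, namely that the redefined adversarial Q-value in Equation~(\ref{s4_6}) is well defined once opponents are frozen. Both arguments hinge on the same key fact---simultaneous moves make each party's action conditionally independent of the others' actions given the state---and both share the same mild gap, namely that Dec-POMDP agents formally condition on local observation histories rather than on $S_t$; you at least flag this issue and sketch how the fixed, state-indexed conditional law absorbs it, which the paper's proof does not address at all.
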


With the proposition above, we can redefine the independent Q-value of adversarial agents below. 

\begin{equation}
  Q^{\pi^\alpha}_i(S,A_i^\alpha)=R_i^\alpha(S,A_i)+\gamma \sum _{S'}P(S'|S,A_i^\alpha)V^{\pi^\alpha}_i(S').
  \label{s4_6}
\end{equation}

Here, \(V^{\pi^\alpha}_i(S')\) is calculated following Equation (\ref{s3_1_1}) by replacing the reward function into \(R_i^\alpha(S,A_i)\) defined in Equation (\ref{s3_4}). As shown above, the new independent Q-value function no longer encloses the policies, observations, or actions of victims nor other agents not belonging to adversaries.  It perfectly addresses the concern about the necessity of victim agents. 

Recall that our reward function is a weighted sum of each victim failure path. The relationship between the weight vector and the independent Q-value function is described in the following proposition (see the proof in Appendix \ref{AP2}). 

\begin{proposition}
    \label{prop_2}
    The long-horizon expected return shares the same weighted changes with the short-term reward function: 
\end{proposition}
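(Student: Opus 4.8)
The plan is to read Proposition~\ref{prop_2} as a \emph{superposition} statement: because the per-step adversarial reward in Equation~(\ref{s4_3}) is a fixed linear combination $R_i^\alpha = W \times (R^i)^\top = \sum_j W_j R_j^i$ of the individual failure-path rewards, the long-horizon quantity built from it (the value function $V_i^{\pi^\alpha}$, and hence the Q-value $Q_i^{\pi^\alpha}$ of Equation~(\ref{s4_6})) inherits exactly the same weighted decomposition. Concretely, I would introduce for each failure path $j$ a per-path value function $V_{i,j}^{\pi^\alpha}$ and per-path Q-value $Q_{i,j}^{\pi^\alpha}$, defined by the Bellman recursions of Equations~(\ref{s3_1_1}) and~(\ref{s4_6}) but driven by the single reward $R_j^i$ in place of $R_i^\alpha$, and then establish
\begin{equation*}
  Q_i^{\pi^\alpha}(S,A_i) = \sum_j W_j\, Q_{i,j}^{\pi^\alpha}(S,A_i) = W \times (Q_i)^\top,
\end{equation*}
which is precisely the claim that the weighting structure of the short-term reward carries over unchanged to the long-horizon return.

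The key steps, in order, are as follows. First I would fix the dynamics: by Proposition~\ref{prop_1}, once every non-adversarial party follows a fixed policy, the trajectory distribution is determined solely by the adversarial joint policy $\pi^\alpha$ and the (reward-free) transition kernel $P$, and in particular is \emph{independent of the reward function}. Second, I would record the linearity of the return functional in the reward: writing $V_i^{\pi^\alpha}(S) = \mathbb{E}_{\pi^\alpha}\big[\sum_{t=0}^{\infty}\gamma^t R_i^\alpha(S_t,A_t)\,\big|\,S_0=S\big]$, linearity of expectation together with the reward-independence of the trajectory law lets me pull the constant weights $W_j$ outside the expectation. Third, I would substitute $R_i^\alpha=\sum_j W_j R_j^i$ and interchange the finite weighted sum with the expectation to obtain $V_i^{\pi^\alpha}=\sum_j W_j V_{i,j}^{\pi^\alpha}$; feeding this into Equation~(\ref{s4_6}) and using $R_i^\alpha=\sum_j W_j R_j^i$ once more yields the stated decomposition for $Q_i^{\pi^\alpha}$. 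The same argument applied to the total quantities $Q_{adv}^{tot}$ and $R_w^{tot}$ of Equation~(\ref{s4_3_1}) gives the totalized version.

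I expect the main obstacle to be making the superposition rigorous rather than merely formal. The cleanest route is to avoid manipulating the infinite sum directly and instead argue by induction on the finite-horizon Bellman unrolling: define each $Q_{i,j}^{\pi^\alpha}$ through the same recursion, assume the weighted identity holds for the $k$-step value iterate $V^{(k)}$, and show it is preserved by one further application of the (reward-linear, weight-independent) Bellman expectation operator, then pass to the limit using $\gamma\in[0,1)$ to guarantee convergence (or invoke episode termination in the episodic setting). Two subtleties must be checked along the way: that the weight vector $W$ is genuinely constant across states, actions, and time, so that it commutes with both the expectation and the $\gamma$-discounting; and that the kernel appearing in Equations~(\ref{s3_1_1}) and~(\ref{s4_6}) is the reward-free kernel justified by Proposition~\ref{prop_1}, so that a single trajectory law underlies every per-path value $V_{i,j}^{\pi^\alpha}$ simultaneously. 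Once these are in place, the decomposition follows by linearity and the proof closes.
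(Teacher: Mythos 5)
Your proposal is correct and follows essentially the same route as the paper's own proof: both reduce the claim to the identity \(V_i^{\pi^\alpha}(s) = W \times V_i(s)^\top\) (your per-path superposition \(V_i^{\pi^\alpha}=\sum_j W_j V_{i,j}^{\pi^\alpha}\) is exactly this identity written component-wise) and establish it by induction, exploiting that the constant weight vector \(W\) commutes with the reward-linear Bellman recursion over the reward-free kernel supplied by Proposition~\ref{prop_1}. The only difference is bookkeeping: the paper inducts backward over a finite, complete trajectory set with terminal value zero, whereas you induct on finite-horizon value iterates and pass to the limit under \(\gamma<1\) (with episode termination as the fallback), which covers the same ground and additionally handles the non-episodic discounted case.
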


\begin{equation}
  Q^{\pi^\alpha}_i(S,A_i^\alpha)=W \times (R_i(S,A_i)+\gamma \sum _{S'}P(S'|S,A_i^\alpha)V_i(S'))^\top . 
  \label{s4_7}
\end{equation}

Here, vectors \(R_i\) and \(W\) are defined following Equations (\ref{s4_1}) and (\ref{s4_2}). \(V_i(S')\) is a value vector calculated with \(R_i\) following the Equation (\ref{s3_1_1}). From Proposition \ref{prop_2}, it was observed that the weight vector remained unchanged during the computation of long-term returns, thereby eliminating concerns about the weight vector's potential misalignment across long-term returns and immediate rewards, and the necessity of choosing between these objectives. 

As introduced in Section \ref{section_background}, the total Q-value is computed using an option consisting of each independent Q-value. With respect to Equations (\ref{s3_5}) and (\ref{s4_3_1}), the total Q-value can be approximated by: 

\begin{equation}
        Q^{tot}\approxeq R^{tot}_w+\gamma\sum_{S'}P(S'|S,A)V_{\pi}(S').
   \label{s4_8}
\end{equation}

Similarly, following Proposition \ref{prop_1} and \ref{prop_2}, Approximation (\ref{s4_8}) can be rewritten as: 

\begin{equation}
        Q^{tot}\approxeq W \times (R^{tot}+\gamma\sum_{S'}P(S'|S,A^\alpha)V_{\pi^\alpha}(S')).
   \label{s4_9}
\end{equation}

With this total Q-value and independent Q-value, we can train the adversaries by Equations (\ref{s3_4}) and (\ref{s3_5}). Finally, the objective function can be denoted as follows.

\begin{equation}
  J(\theta)=maximum_{A{^{\alpha}}} Q_{\alpha}^{tot}(S,A).
  \label{s4_10}
\end{equation}

\section{Reward Calculation}\label{section_reward_shaping}

In Section \ref{section_tech_overview}, we discuss the entire structure of the technique (as shown in Figure \ref{fig:2}) and redefine the adversarial reward function. In this section, we will provide further details on how to calculate the adversarial reward. More specifically, we will introduce an estimation-based reward model updated by a rule-based method for training adversarial party agents.

\begin{figure*}
\centering
\includegraphics[width=1\linewidth]{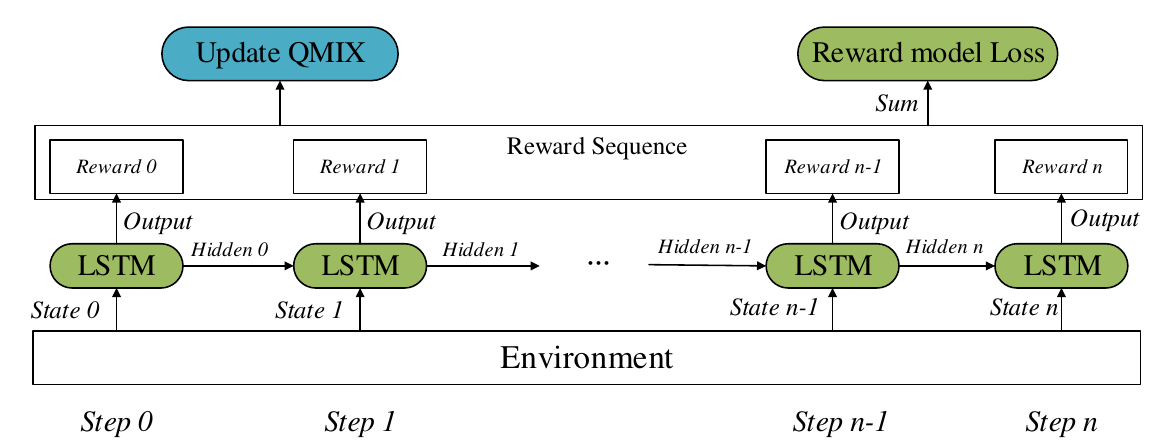}
\caption{Estimation-based reward model framework}
\label{fig:3}
\end{figure*}

\subsection{Rule-based reward calculation}

Rule-based reward calculation is a kind of usual and common method to calculate reward for most reinforcement learning. Recall that adversarial reward is defined by the victim's failure paths. Under this setup, we can now calculate adversarial reward with the following rules:

\noindent\textbf{Rule 1}: Final reward signal only occurs at the last step and only depends on whether the target mission is completed. 

\noindent\textbf{Rule 2}: If target mission is completed by victim agents, the reward at the last step will be 0.

\noindent\textbf{Rule 3}: If target mission is not completed by victim agents, a great reward signal will occur at the last step.

\noindent\textbf{Rule 4}: If the rule-based method is the baseline in evaluation, the
immediate reward signal is depended on the process of each corresponding failure path computed by Equition (\ref{s4_3_1}).

\noindent\textbf{Rule 5}: If the rule-based method is the ground truth of the estimation-based reward model, there will be no any immediate reward. 

With Rules above, adversarial agents are now able to reach positive reward easily and do not need knowing reward function of victim agents in advance.

\subsection{Estimation-based reward model}

Although rule-based methods have already solved most problems in reward shaping, yet still one problem remains. If using a rule-based method as our reward model to calculate immediate rewards in each step, we still have to follow the assumption that the adversarial party agents shall obtain global states in both the training and evaluation phases, which is not practical in a cooperative environment as mentioned in section \ref{section_intro} and section \ref{section_problem}. To remove this unpractical assumption, we provide another solution of reward shaping. 

Without global states, adversarial reward cannot be directly calculated. In this case, we design a neural network to estimate adversarial reward in each step, which takes the partially observed state of the adversarial party as input and gives an estimated reward of the total reward of the adversarial party in each step. While global state is not available at each step, adversarial party can only approach the result of the victim mission after the last step. Therefore, the ground truth cannot be designed for each output of the network in each step. Instead, we calculate ground truth by using the rule-based method with the result of the victim mission after the last step. Considering the state at each step is time sequence data, we design this network based on Long Short-Term Memory (LSTM) algorithm and store each output until the last step as shown in Figure \ref{fig:3}. Under these information, adversarial reward of each step can be estimated by calculating loss with distance between the sum of every reward of each step and ground truth:
\begin{equation}
  L_{R-model}(\theta)=(R^{tot}_{GT}-\sum _{i=1}^{l}M_i(obs))^2,\label{16}
\end{equation}
where \(M\) represents LSTM network, \(l\) is the length of this episode and \(obs\) refers to the partial observation of adversarial-party agents. With estimation-based reward model, we can finally display the algorithm of our proposed attack method as shown in Appendix \ref{AP3}.

\section{Evaluation}\label{section_evaluation}

In the evaluation, we aim to answer the following research questions:
\begin{itemize}[left=0pt]
    \item \textbf{RQ1:} What is the generalization effectiveness of our neutral agent-based method across different scenarios in multi-party open systems? 
    \item \textbf{RQ2:} What is the performance of our estimation-based reward model, compared with  that of the traditional reward model and the direct use of the rule-based calculation method as the reward model?
    \item \textbf{RQ3:} What is the influence of varying numbers of adversarial agents on victim agents?
    \item \textbf{RQ4:} How effective is our neutral agent-based method in various difficulty-level tasks?
    \item \textbf{RQ5:} How effective is our neutral agent-based method against simple defenses by a single round of adversarial retraining and other existing countermeasures?
\end{itemize}

\subsection{Experiment setup}

Our experiments are deployed on the Starcraft \uppercase\expandafter{\romannumeral 2}-based intelligent agent testing platform - the Starcraft Multi-Agent Challenge (SMAC), and the automonous driving simulation Highway-env, which are both widely adopted platforms for evaluating reinforcement learning algorithms. 
There are three primary reasons for selecting Starcraft \uppercase\expandafter{\romannumeral 2} and SMAC as our experimental platforms. Firstly, as discussed in Section \ref{section_problem}, our experiments should be conducted in a more general setting to ensure the applicability of our attack method across various scenarios. Starcraft \uppercase\expandafter{\romannumeral 2} is built on a multi-party open system, aligning with our assumptions. Secondly, SMAC provides an open-source and convenient interface for Starcraft \uppercase\expandafter{\romannumeral 2}, along with flexible map designs and unit attribute configurations, enabling us to design diverse scenarios for testing and comparing our attack methods. Thirdly, as a commonly used Multi-Agent Reinforcement Learning (MARL) testing platform, SMAC has hosted numerous experiments involving various RL methods and attacks on reinforcement learning, allowing us to conveniently select baseline methods for comparison. Below, we will briefly introduce the Starcraft \uppercase\expandafter{\romannumeral 2} game environment, the agent configuration on the SMAC platform, and the evaluation metrics.

Similarly, the reasons for selecting Highway-env as our evaluation environments are as follows. First, we seek to explore the performance of our adversarial deployment in semi-realistic task scenarios rather than limiting it exclusively to gaming environments. Second, the Highway-env framework modularizes decision-making tasks in autonomous driving contexts, disentangling them from perception layers and other components irrelevant to DRL decision processes. This isolation enables a granular examination of how our proposed adversarial attacks mislead the decision-making mechanisms of autonomous driving agents. Finally, Highway-env provides heterogeneous task scenarios such as Highway, Intersection , and Merging, which structurally align with the experimental requirements articulated in our research questions. 

The further detailed introductions of Starcraft \uppercase\expandafter{\romannumeral 2}, SMAC and Highway-env can be seen in Appendix \ref{AP4}.

\textbf{StarCraft \uppercase\expandafter{\romannumeral 2} Maps}. To better validate 5 research questions above, we deploy different attacks through multiple maps. SMAC platform includes some of them, the rest are designed by us using StarCraft \uppercase\expandafter{\romannumeral 2} Map Editor. Following, we briefly introduce the maps in our experiments. 

\begin{itemize}[left=0pt]
    \item \textbf{Map A: "1m"}. This map is designed by us with 1 Marine for each party, with two competitive parties separately controlled by the victim agent and the PC agent. The task for victim agent is to kill unit Marine of its competitive party. We deploy neutral Marines as our adversarial agents in this map. 
    \item \textbf{Map B: "1c\_vs\_30zg"}. This map is designed by us with 1 Colossi for victim and 30 Zerglings for victim's task. The task object is eliminating all Zerglings in limited steps. We deploy neutral Colossi as our adversarial agents in this map. 
    \item \textbf{Map C: "8m"}. This map is contained in SMAC map list with 8 Marines for each party with two competitive parties separately controlled by victim agents and PC agents. The task for victim agents is to kill all Marines of its competitive party. We deploy neutral Marines as our adversarial agents in this map. 
    \item \textbf{Map D: "MMM"}. This map is contained in the SMAC map list with 1 Medivac, 2 Marauders, and 7 Marines for each party, with two competitive parties separately controlled by the victim agent and the PC agent. The task for victim agents is to kill all units of its competitive party. We deploy neutral Marines as our adversarial agents in this map. 
    \item \textbf{Map E: "6h\_vs\_8z"}. This map is contained in SMAC map list with 6 Hydralisks for victim and 8 Zealots as victim's task.The task object is eliminating all Zerglings in limited steps. We deploy neutral Hydralisks as our adversarial agents in this map. 
\end{itemize}

\textbf{Highway-env scenarios}. As an integral component of our experimental framework, we deploy the attack across diverse autonomous driving simulation scenarios on the Highway-env platform. Highway and intersection are autonomous driving simulation environments within the Highway-env platform, composed of a straight-line highway or intersection, controllable vehicles, and other vehicles. In these environments, we can construct specific scenarios by adjusting the number of controllable vehicles and other vehicles. Below, we present a concise overview of the scenarios we used in experiments.

\begin{itemize}[left=0pt]
    \item \textbf{highway\_M}.  This scenario consist of three victim agents, three adversarial agents, and two other vehicles in highway.
    \item \textbf{intersection\_S}. This scenario consist of one victim agent, three adversarial agents, and two other vehicles in intersection.
\end{itemize}

\subsection{RQ1: Generalization effectiveness of our neutral agent-based method}

In this section, we will prove the effectiveness of our method across various environments designed to describe each circumstance in section \ref{section_problem}. Specifically, in this experiment design, we separately train adversarial agents in maps A, B, C, D in StarCraft \uppercase\expandafter{\romannumeral 2} and scenarios highway\_M, intersection\_S and observe the convergence status. Each map corresponds to different circumstances of the victim described in section \ref{section_problem}. Map B is a single-agent task, A and intersection\_M  are two-agent competitive tasks, and C, D, highway\_M, are cooperative MARL tasks. All these maps can occur in a multi-party open system. The deployed attack results are displayed in Table \ref{tab:0}.



As shown in Table \ref{tab:0}, our proposed method can decrease the win rates from at least 95\% to at most 10\% across every map and scenario, which proves the generalization of our proposed attack method that can deploy an effective attack under each scenario in multi-party open systems we discussed in section \ref{section_problem}. 

\begin{table}
    \centering
    \begin{tabular}{c|c|c} \hline
        \textbf{Winning rate}& \textbf{Under attack}  & \textbf{No attack} \\\hline
        \textbf{Map "1m"}& 0.04 & 1.0 \\\hline
        \textbf{Map "1c\_vs\_30zg"} &0.1  &1.0 \\\hline
        \textbf{Map "8m"} & 0.08 & 0.95 \\\hline
        \textbf{Map "MMM"} & 0.0 & 0.96  \\\hline
        \textbf{"highway\_M"} & 0.12 & 1.0  \\\hline
        \textbf{"intersection\_S"} & 0.28 & 0.72  \\\hline
    \end{tabular}
    \caption{Wining rate of victim agents with and without attack by our proposed method}
    \label{tab:0}
\end{table}

\subsection{RQ2: Performance of our estimation-based reward model}

Recall that we have proposed two reward shaping methods above, where the estimation-based model is updated by the rule-based method. Reward shaping is an important component of our proposed method, thus, in this experiment, we aim to validate the performance of the estimation-based reward model with baselines. Noted that even calculating immediate reward by the rule-based method is unpractical, as discussed in section \ref{section_reward_shaping}, we still would like to treat it as an alternative baseline model compared to the estimation-based model. Besides, we also introduce a traditional reward model designed by ~\cite{gleave2019adversarial, wu2021adversarial} as another baseline. Within the proposed framework, we separately train the multi-adversarial policies with the estimation-based reward model, rule-based model, and the traditional model while keeping other components fixed. We compare the convergence during training and the attack effectiveness of the trained adversarial agents to evaluate the impact of different reward shaping designs.

In this experiment, we choose the map C, D, E and scenario highway\_M, intersection\_S to mainly validate how our proposed reward models perform in complex multi-party open systems, especially the estimation-based reward model, which is specially designed for these complex environments. 

As displayed in Figure \ref{fig:6}, while attacking victim agents under a relatively easy task (map C and highway\_M), the estimation-based reward model shows faster convergence speed and greater attack effectiveness compared to both the rule-based reward model and baseline model. During the attack on victim agents under a medium difficulty task (map D), estimation-based reward model performs similarly to the rule-based model and better than the baseline. And all reward models share similar effectiveness attacking the victim agents under difficult tasks. With the observation above, we carefully conclude that the estimation-based reward model is proved to be more general and effective.

\subsection{RQ3: Influence of varying numbers of adversarial agents}

In our proposed method, we have introduced multi-agent reinforcement learning into the adversarial policy training framework. Therefore, this experiment aims to observe the impact of changing adversarial agent quantity from single to multiple on both training and attack deploying stages. 

In this experiment, we change the quantity of adversarial agents while training on map C, D, E and keeping other components fixed. In map C, we mainly validate the collaboration capability of adversarial agents in attacking victims under a well-matched competitive environment. In map D, we intend to verify the effectiveness of the multi-agent adversarial policy against victim agents that cooperatively control multiple kinds of units to achieve their task. In map E, we try to demonstrate that the multi-agent method performs better as well when attacking victim agents under very difficult tasks. 

\begin{figure*}
    \centering
    \includegraphics[width=1\linewidth]{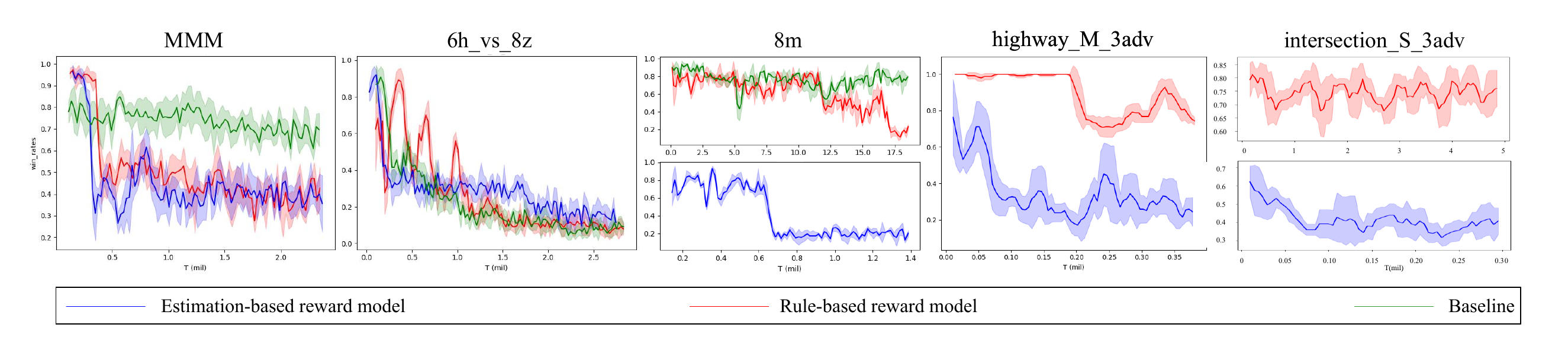}
    \caption{Comparison of wining rates trend during training adversarial agents across different reward model in Starcraft \uppercase\expandafter{\romannumeral 2} maps}
    \label{fig:6}
\end{figure*}



As shown in Appendix 
, multi-agent collaboratively attacking displays much more effectiveness than the single agent method in each map. To some extent, the more adversarial agents attacker deploys, the more effective the attack is. However, deploying too many adversarial agents will potentially increase the risk of being detected. Therefore, attacker should consider a specific number of adversarial agents in the light of specific conditions. 

\subsection{RQ4: Effectiveness in various difficulty level tasks}


Considering that victim agents may have varying sensitivities to adversarial attacks in tasks with different levels of complexity, we designed this experiment to investigate the impact of the complexity of the victim task on the attack effectiveness. Using the same parameters and algorithm, we implement attacks on well-trained victim agents in multiple SMAC maps with varying difficulty levels. We compare and observe the convergence during adversarial agent training and the resulting attack effectiveness to assess how task complexity influences the attack outcomes.

As in 
Appendix \ref{AP6} and 
Figure \ref{fig:6}, adversarial agents take very long period episodes of training to map C (the easy victim task) and train more efficiently in map D (the medium difficult victim task) and E (the difficult victim task). Besides, adversarial agent training shows more stable in map E compared with map D. Under these discoveries, we conclude that victim agents under more difficult tasks show more vulnerability to our proposed attack method.

\subsection{RQ5: Effectiveness against countermeasures}

Our proposed attack method is very difficult to defend due to its features such as unpredictable, hard to detect, and varying number of adversarial agents. However, we still intend to know whether our attack can be defended if the victim foresees our attack and is aware of the number of adversarial agents. Therefore, in this part, we try multiple potential defense methods to explore the performance of our attack facing different countermeasures. 

\textbf{Simple retrain}.In this experiment, we retrain the policy of victim agents in map D and fix the policy of 3 adversarial agents. After retraining, we observe the performance of victim agents when executing tasks under attack from adversarial agents and executing tasks without attack.

\begin{table}
    \centering
    \begin{tabular}{c|c|c} \hline
        \textbf{Wining rate} & \textbf{Before retraining} & \textbf{After retraining}\\  \hline 
        \textbf{Under attack} & 0.0 & 0.8\\  \hline
        \textbf{Without attack} & 0.95 & 0.35\\ \hline
    \end{tabular}
    \caption{Comparison of capability between victim agents before and after retrain on map "6h\_vs\_8z"}
    \label{tab:1}
\end{table}

\begin{table}
    \centering
    \begin{tabular}{c|c|c} \hline
         & Under attack & No attack \\\hline
       CAMP\_H  & 0.44 & 0.82 \\\hline
       CAMP\_S  & 0.06 & 1.0 \\\hline
       PATROL  & 0.82 & 0.98 \\\hline
       PATROL\_R  & 0.38 & 0.98 \\\hline
    \end{tabular}
    \caption{The performance of our attacks facing against existing countermeasures. CAMP\_H is using CAMP as the countermeasure in situation "intersection\_S" and CAMP\_S is using CAMP as the countermeasure in Starcraft \uppercase\expandafter{\romannumeral 2} with the map "1m". PATROL is using PATROL as the countermeasure in Starcraft \uppercase\expandafter{\romannumeral 2} with the map "1m" and PATROL\_R is re-attack after retrain adversaries under modified adversarial density. }
    \label{tab:3}
\end{table}

As shown in 
Appendix \ref{AP6} and 
Table \ref{tab:1}, victim agents do not converge during retrain in map D, and the retrain also influences significantly in normal task without attack. 

\textbf{Existing reachable defense method}. Recently, researchers have proposed several defense and detection methods for DRL, which can mainly be categorized into adversarial training based defense method ~\cite{behzadan2017whatever, mandlekar2017adversarially, pattanaik2017robust, guo2023patrol}, noise based defense method ~\cite{behzadan2018mitigation, wang2025campodysseyprovablyrobust} and detection of adversarial examples ~\cite{havens2018online, lin2017detecting}. On the one hand, as discussed above, a simple adversarial retrain cannot well-defend our attacks. Inspired by PATROL ~\cite{guo2023patrol}, only through a game-theoretic reformulation of the optimization problem — seeking an optimal balance between attack resilience and the model's primary task performance — can an effective defense against our attack be achieved. However, PATROL ~\cite{guo2023patrol} is designed by using Stackelberg game model as theory fundament relying on two-agent zero-sum competitive environment, which inherently restricts its applicability to defending against our attacks. On the other hand, noise based defense methods and existing detection methods are only effective on the environment-manipulation based attack against DRL. Therefore, none of the existing work can detect or defend our proposed attacking method. Still, we evaluate our attacks facing some of the typical existing countermeasures including CAMP ~\cite{wang2025campodysseyprovablyrobust} (environment-manipulation based defense) and PATROL ~\cite{guo2023patrol} (adversarial retrain). As we can observe from Table \ref{tab:3}, the environment-manipulation based defense such as CAMP ~\cite{wang2025campodysseyprovablyrobust} cannot affect the attacks from adversarial policy training methods. While PATROL can effectively defend our attack when there is only one victim agent and one adversarial agent in a zero-sum competitive environment and posit that the defender possesses prior knowledge of the attacker's agent. However, the aforementioned assumptions are operationally untenable, and model robustness collapses when subjected to re-trained attackers under modified adversarial density.

As discussed above, both \textbf{simple retrain} and \textbf{existing countermeasures} fail to defend our attacks effectively and practically. Therefore, we carefully conclude that our proposed method is very hard to defend against with existing techniques. 

\section{Related Work}\label{section_related_work}

Existing research on the security of DRL can be categorized into two types: environment manipulation-based methods and adversarial policy learning-based methods. In the following sections, we review representative works in each category and highlight their distinctions from our proposed method.
\subsection{Environment manipulation-based methods}

In the field of secure research on deep learning, many studies have demonstrated that neural networks are highly sensitive to adversarial perturbations ~\cite{carlini2017towards, goodfellow2014explaining, gu2017badnets, papernot2016limitations}. Attackers can exploit adversarial training by adding noise to the neural network's input to force misclassification. Researchers in the domain of deep reinforcement learning have applied this discovery to secure research by adding noise to an agent's observations, thereby preventing the agent from making optimal decisions.

In existing work, Huang \textit{et al.} ~\cite{huang2017adversarial} demonstrated that adversarial learning can easily be used to propagate noise into policy networks, causing the agent to lose the game. Subsequent studies ~\cite{kos2017delving, lin2017tactics, russo2019optimal} improved upon these methods, enhancing the efficiency of such attacks. In recent research, researchers have extended adversarial attacks to cooperative multi-agent algorithms, attempting to disrupt multi-agent collaboration by using adversarial training or custodial attacks to interfere with, manipulate, or alter the observations, reward signals, or specific actions of individual agents ~\cite{lin2020robustness, zan2023adversarial, wu2023reward, liu2023efficient}. However, both of these methods assume that the attacker has the ability to monitor and overwrite the observation, reward, or reward signal, which is highly impractical due to the significant overhead involved.

\subsection{Adversarial policy learning-based methods}
Gleave \textit{et al.} ~\cite{gleave2019adversarial} were the first to introduce adversarial policy. Distinct from model manipulation attacks, adversarial policy attacks do not necessitate access to victim observation, action or reward. Instead, they introduce an adversarial agent to deceive victim agents with well-designed actions, causing victim to take counterintuitive actions and ultimately fail to achieve their goals. Wu \textit{et al.} ~\cite{wu2021adversarial} induced larger deviations in victim actions by perturbing the most sensitive feature in victim observations, and Guo \textit{et al.} ~\cite{guo2023patrol} extended adversarial policies to general-sum games. However, these researches focus on attacking single RL agent in competitive environment. On the one hand, these studies use -rvictim as reward without further design. Using this simple reward in attacking c-MARL will cause most of feedback at the beginning of training being negative reward, making policy hard to converge. On the other hand, none of these studies have considered adversarial policies in c-MARL settings. Simin \textit{et al.} ~\cite{li2023attacking} introduced Adversarial Minority Influence, a black-box policy-based attack for c-MARL, driving minorities (attackers) unilaterally sway majorities (victims) to adopt its own targeted belief. However, this research based on the authority to access at least one agent of c-MARL agent set. Under this assumption, attacker either find a way to spy in the agent group of c-MARL or hack into the at least one agent in c-MARL. In most cases, such expectation cannot be guaranteed. Besides, this research fails to prove the reason why victims fail is the influence of adversarial agent, or just losing an ally breaking the cooperation of MARL. In our experiments, we discover that even make one agent of c-MARL agent set act randomly could severely increase the failure rate.
\section{Discussion}\label{section_discustion}

\subsection{Deep learning and rule-based method}

In this study, we investigated the attacks and effects of adversarial policy training on deep reinforcement learning (DRL) across various environments, extending it to a general attacking method for DRL. In the future, we plan to continue exploring the application of adversarial policy training in attacking methods beyond DRL. Specifically, we aim to apply adversarial policy training against deep neural networks (e.g., RNN, LSTM) or rule-based methods in sequential decision-making tasks. To achieve this goal, several challenges must be addressed. First, in this work, both the victim and attacker utilize algorithms such as DQN or QMIX, which can be modeled as Dec-POMDP. However, in sequential decision-making tasks beyond DRL, the victim no longer employs reinforcement learning algorithms, or even not an agent. Additionally, scenarios utilizing DNNs, large models, or rule-based methods are significantly more diverse compared to those employing DRL. Therefore, under this circumstance, migrating our attacking method might require significant modification or even a completely new design.  Second, the design of the objective function of adversarial agents in this work is based on estimates part of the victim's action-value functions. However, in non-DRL methods, such value or action-value functions may not exist, thus, extending our attacking method might require the redesign of the objective function for different task scenarios. Third, in the environments of RL, both the attacker and victim operate in real-time under the same state. Yet, many sequential decision-making tasks that do not employ DRL are non-real-time, meaning that the roles represented by the attacker and victim may not operate simultaneously. Consequently, the attacker cannot design an immediate reward function based on the victim's current state and task situation, rendering the rule-based reward model used in this work ineffective and potentially requiring major revisions to the estimation-based reward model.

\subsection{Transferability}

Recent research ~\cite{huang2017adversarial} indicate strong transferability of adversarial attacks within reinforcement learning environments. Specifically, an attack or interference targeting one policy network can be easily transferred to another distinct policy network under same reinforcement learning task. For instance, an effective attack against a reinforcement learning agent utilizing LSTM as its policy network can be quickly adapted and applied to disrupt a reinforcement learning agent employing MLP as its policy network. In future work, we aim to investigate the transferability of our adversarial policy attacking approach. We will evaluate whether an adversarial policy trained on victim agents using a specific algorithm under the same task environment can effectively attack victim agents using an alternative algorithm.

\section{Conclusion}\label{section_conclusion}

In this paper, we propose an adversarial attack method against DRL in multi-party open systems based on adversarial policy training with multiple neutral agents. Different from existing studies, we do not manipulate either environments or victim agents and we do not require direct interactions with victim agents as either competitive or cooperative standings. Besides, we design our method to cover as many scenarios as possible in multi-party open systems. 
Technically, we redesign the reward function by exploring different failure paths of each scenario to address the challenge of reward shaping. Furthermore, we propose an estimation-based reward model, which estimates the reward for adversarial agents with partial observations using the LSTM network without the requirement of the global state in each step. 
The evaluation demonstrates that our proposed method performs well on attacking victim agents under varying scenarios in multi-party open systems. Our estimation-based reward model is also proved to be more effective compared with baselines. With above discoveries and discussions, we safely conclude that our proposed attack method is much more general and can deploy effective attacks against DRL in various open environments.

\bibliographystyle{ACM-Reference-Format}
\bibliography{references}

\appendix
\section{Further discussion of assumptions}
\label{AP7}

 Sharing the similar point with ~\cite{gleave2019adversarial, wu2021adversarial, guo2021adversarial}, in this work, we assume only adversarial party agents adapt their policy in a multi-party open system immediately. With this assumption, we take a real-world scenario of a multi-party open system as an example, where a group of vehicles controlled by agents perform on-ramp merging tasks on the highway and other vehicles controlled by agents or human drivers with their own tasks, going straight, lane-changing, or overtaking, etc. A group of RL agents requires millions of episodes of training and multiple circumstance evaluations to ensure its ability and safety ~\cite{OpenAI2019}, which takes years of time to retrain the model, collect data, and design experiments. Therefore, participants cannot afford to retrain the algorithm and update it on every intelligent driving system in a short period of time.
 
Besides, it should be noted that this work does not assume that we can manipulate the environment or any agents of the victim party or those not belonging to the attacker. Instead, we assume that attacks occur only in the open environment that allow deploying third party agents at any time without directly participating in task of victim agents. We believe the replace of this assumption is crucial and could make an adversarial attack more practical. To illustrate this argument, we again take for example the aforementioned autonomous driving task. In this circumstance, manipulation of any vehicles that not belong to attackers means break into the intelligent driving system, alters the code related to the autonomous driving, and thus influences the environment that the agents interact with. This is not practical, as it would require thousands of hours of effort from professional hackers and does not guarantee the successful identification of software vulnerabilities or the acquisition of control. In most reinforcement learning applications, the environments are open and there are no restrictions on the deployment of other agents. However, maliciously causing damage to devices owned by others is generally prohibited by rules and might cause significant losses for attacker. Therefore, we assume that attacks occur in an open environment, but adversarial agents are not allowed to take any actions that would directly participate in the tasks of victim agents.

It should also be noted that most research on adversarial policies relies on a hidden assumption to calculate the reward signal: during every steps in an episode, adversarial agents and victim agents share global observations ~\cite{wu2021adversarial, guo2021adversarial, li2023attacking}. This means that adversarial agents can observe all relevant agents (including victim agents, potential target agents, and third-party agents) in the entire environment, as well as the task completion status of victim agents. This assumption is natural in two-player competitive environments because the task scenarios are often narrow and involve only adversarial agents and victim agents. However, in non-competitive environments, the scenarios are often more diverse and complex, with many third-party agents and open environments. In such cases, no agent can quickly obtain global information. Therefore, we can only assume that all agents share the global state transition function, but agents from different parties cannot share information, and no agent can rapidly obtain the global state through observation.

\section{Example figure of possible failure paths}
\label{AP5}

\begin{figure*}[ht!]
\centering
\subfloat[Failure path 1]{\includegraphics[width=0.32\linewidth]{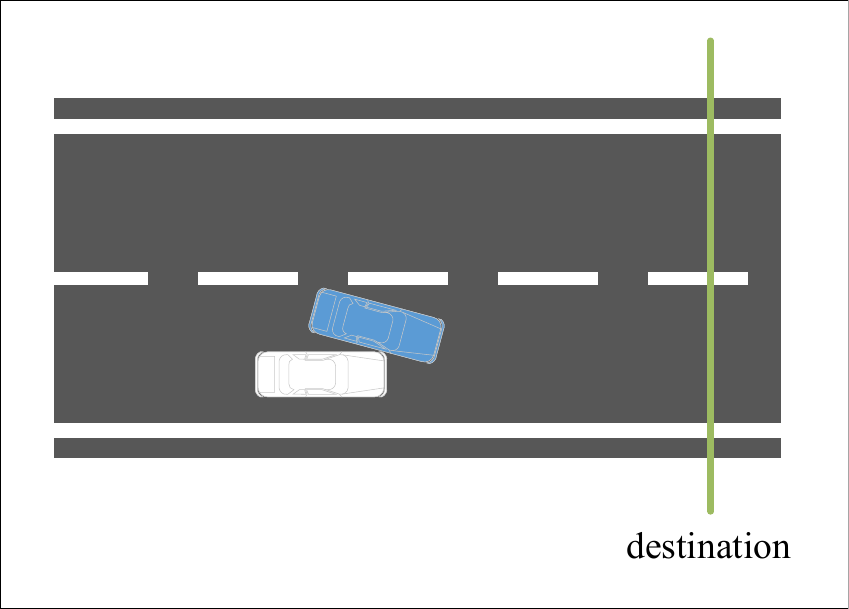}\label{figS5_1a}}
\subfloat[Failure path 2]{\includegraphics[width=0.32\linewidth]{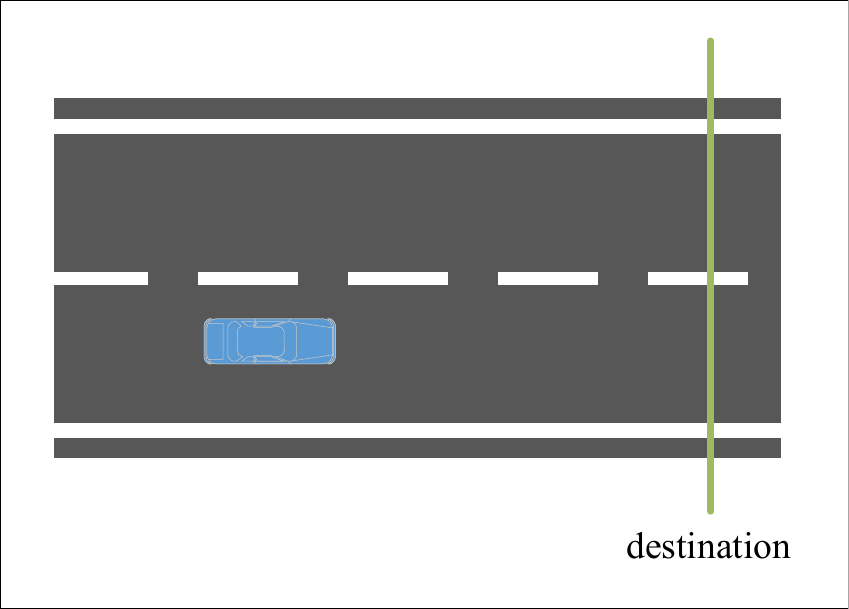}\label{figS5_1b}}
\subfloat[Failure path 3]{\includegraphics[width=0.32\linewidth]{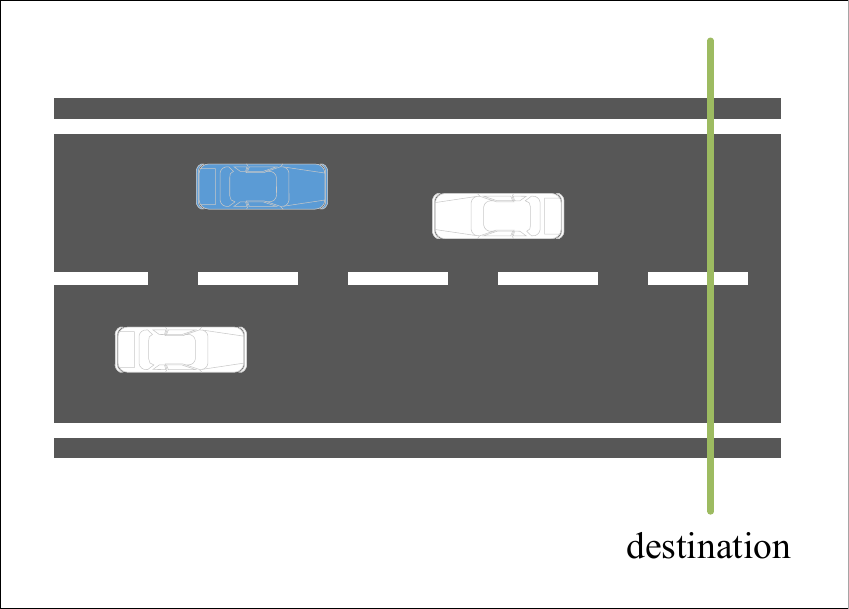}\label{figS5_1c}}
\caption{Possible failure paths of autonomous driving task: collision occured, unreach destination before time limitation, and disobey the traffic rule (drive against the traffic flow).}
\label{fig:S5_1}
\end{figure*}

The example of possible failure paths are shown as Figure \ref{fig:S5_1}

\setcounter{proposition}{0}
\setcounter{equation}{0}
\section{Proof of Proposition 1}
\label{AP1}

\begin{proposition}
    In a multi-party open system, if all agents follow fixed policies except agents of one specific party, the state transition of the environment system will depend only upon the joint policy of agents belonged to this specific party rather than the joint policy of all agents in the system.
\end{proposition}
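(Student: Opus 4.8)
The plan is to show that, once every agent outside the distinguished party is pinned to a fixed policy, those agents' actions can be marginalized out of the joint transition kernel, leaving an effective transition that varies only with the distinguished party's policy. I would work directly with the Dec-POMDP transition $\mathcal{P}:\mathcal{S}\times\mathcal{A}_\alpha\times\mathcal{A}_v\to\Delta(\mathcal{S})$ from the Problem Definition, writing $P(s'\mid s,a_\alpha,a_v)$ for the transition probability, where $a_\alpha$ is the joint action of the free party $\mathcal{N}_\alpha$ and $a_v$ is the joint action of all remaining (fixed-policy) agents. Note that lumping all fixed agents together is harmless: whether they belong to one party or several, they can be treated as a single block with a combined action $a_v$.

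First I would express the fixed agents' behavior as a state-conditioned distribution. Since each non-free agent $j$ follows a fixed stationary policy $\pi_j$ acting on its own observation $o_j$, and each observation is a function of the global state, the joint action of the fixed block is drawn from a fixed kernel $\pi_v(a_v\mid s)=\prod_j \pi_j\!\left(a_j\mid o_j(s)\right)$ that contains no free-party quantity. The key observation to isolate here is that $\pi_v$ is a constant of the problem, not a free variable.

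Next I would marginalize over $a_v$ to obtain the induced single-party transition
\[
  P^{\pi_v}(s'\mid s,a_\alpha)=\sum_{a_v}\pi_v(a_v\mid s)\,P(s'\mid s,a_\alpha,a_v),
\]
and then average over the free party's policy to get the state-to-state kernel $P^{\pi}(s'\mid s)=\sum_{a_\alpha}\pi_\alpha(a_\alpha\mid s)\,P^{\pi_v}(s'\mid s,a_\alpha)$. Because both $\pi_v$ and $P$ are fixed, the only residual dependence on any agent's policy is through $\pi_\alpha$; hence the system's state transition is determined solely by the distinguished party's joint policy. This is exactly what licenses replacing $\mathcal{P}$ by the reduced single-party kernel $\mathcal{P}_\alpha$ appearing in Equation~(\ref{s4_6}).

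The step I expect to require the most care is the partial-observability bookkeeping: I must argue that the fixed agents' action distribution is genuinely a function of the current state (through the observation maps and stationary policies) rather than of the full interaction history, so that $\pi_v(\cdot\mid s)$ is well defined and the marginalization collapses cleanly into a time-homogeneous kernel. Once that is secured, the remainder is a routine summation, and the multi-party generalization adds nothing essential, since any number of fixed parties are absorbed into the single product kernel $\pi_v$.
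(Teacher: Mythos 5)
Your proof is correct, but it takes a genuinely different route from the paper's own proof in Appendix~\ref{AP1}. The paper never marginalizes out the fixed agents' actions: it factorizes the joint probability of the next state \emph{and} all parties' actions, $P(S_{t+1}, A_t^\alpha, A_t^v, A_t^\tau \mid S_t)$, into the transition kernel times per-party policy factors, observes that the fixed-policy factors collapse into a quantity $c = \pi^v(A_t^v\mid S_t)\,\pi^\tau(A_t^\tau\mid S_t)$ that cannot change during adversarial training, and then lifts this invariance through trajectory probabilities (Equation~(\ref{A1_3})) to conclude that the adversaries' value functions and Q-functions vary only with $\pi^\alpha$. You instead sum over the fixed block's actions to construct an explicit reduced kernel $P^{\pi_v}(s'\mid s, a_\alpha)=\sum_{a_v}\pi_v(a_v\mid s)P(s'\mid s,a_\alpha,a_v)$, a fixed object once $\pi_v$ is frozen. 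What your construction buys: it literally produces the single-party transition function $\mathcal{P}_\alpha$ invoked in Equation~(\ref{s4_6}), so the proposition's claim about ``the state transition'' is established directly rather than obtained as a corollary about value functions; moreover, your flagged concern about partial observability --- that $\pi_v$ must be conditioned on the current state (through the observation maps) rather than on interaction histories --- makes explicit an assumption the paper adopts silently when it writes $\pi^v(A_t^v\mid S_t)$, and your remark that any number of fixed parties can be absorbed into one block subsumes the paper's separate treatment of victims $v$ and third parties $\tau$. What the paper's route buys: by staying at the level of trajectory probabilities, it delivers exactly the downstream fact used in Section~\ref{section_tech_overview}, namely that the Q-values in Equation~(\ref{A1_2}) change only through $\pi^\alpha$, without needing to introduce or name the induced kernel. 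The two arguments rest on the same factorization of the joint action distribution, so they are complementary: yours is the cleaner proof of the stated proposition, while the paper's is tailored to the consequence it needs.
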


\begin{proof}
    We divide the open environment POMDP into three parts: adversaries, victims, and other third-party agents, separately denoted as \(\alpha\), \(v\) and \(\tau\), each of which contains several independent agents and takes joint action at each step. We assume that agents in \(v\) and \(\tau\) follow fixed policies, and agents in \(\alpha\) can update policies adversarially. At global state \(S_t\), the probability of taking the joint actions \((A_t^\alpha, A_t^v, A_t^\tau)\) and transiting to \(S_{t+1}\) is: 

    \begin{equation}
        \begin{aligned}
            &\ \ \ \ \ P(S_{t+1}, A_t^\alpha, A_t^v, A_t^\tau | S_t) \\ &= P(S_{t+1} | A_t^\alpha, A_t^v, A_t^\tau, S_t)P(A_t^\alpha, A_t^v, A_t^\tau | S_t) \\
            &=P(S_{t+1} | A_t^\alpha, A_t^v, A_t^\tau, S_t)P(A_t^\alpha | A_t^v, A_t^\tau , S_t)P(A_t^v, A_t^\tau | S_t) \\
            &= P(S_{t+1} | A_t^\alpha, A_t^v, A_t^\tau, S_t)\pi^\alpha(A_t^\alpha | S_t)\pi^v(A_t^v|S_t)\pi^\tau(A_t^\tau|S_t) \\
            &= c \cdot P(S_{t+1} | A_t^\alpha, A_t^v, A_t^\tau, S_t)\pi^\alpha(A_t^\alpha | S_t),
        \end{aligned}
        \label{A1_1}
    \end{equation}

    where \(c = \pi^v(A_t^v|S_t)\pi^\tau(A_t^\tau|S_t)\). Given that at a time step \(t\), the joint action of adversaries \(A_t^\alpha\) depends only upon the current state \(S_t\), we have \(\pi^\alpha(A_t^\alpha | S_t) = P(A_t^\alpha | A_t^v, A_t^\tau , S_t)\). 

    As we can observed from Equation (\ref{A1_1}), during the adversarial training process, the only part that changes agents' policies is \(\alpha\). Therefore, the changes in every agents' value functions and Q-values are determined by the changes of \(\pi^\alpha\). Mathematiclly, given a set of trajectories \(\{tr_1, tr_2, \dots, tr_m\}\), the Q-value functions of each agent \(i\) in \(\alpha\) can be denoted as:

    \begin{equation}
            Q_i^{\pi^\alpha} = R_i^\alpha(S,A_i) + \gamma\sum_{S'}P(S'| S, A_i)V_i^\alpha(S').
        \label{A1_2}
    \end{equation}

    In Equation (\ref{A1_2}), 

    \begin{equation}
        \begin{aligned}
            &V_i^\alpha(S') = \sum_{m=1}^M R^\alpha(\tau_m)P(\tau_m;\theta),\\
            &P(\tau;\theta) = P(S_0)\prod_{t=0}^{T-1} P(S_{t+1}, A_t^\alpha, A_t^v, A_t^\tau | S_t)
        \end{aligned}
        \label{A1_3}
    \end{equation}

    Similar to Equation (\ref{A1_1}), in Equation (\ref{A1_3}), the only part that changes agents' policies is \(\alpha\). With plugging Equation (\ref{A1_3}) into Equation (\ref{A1_2}), we carefully conclude that only the changes in joint policy \(\pi^\alpha\) of adversaries, rather than the joint policy of all agents in system,  determine the change in value functions and Q-value functions of each adversary. 

\end{proof}

\section{Proof of Proposition 2}
\label{AP2}

\begin{proposition}
    \label{prop_2}
    The long-horizon expected return sharing the same weighted changes with the short-term reward function: 
\begin{equation}
  Q^{\pi^\alpha}_i(S,A_i^\alpha)=W \times (R_i(S,A_i)+\gamma \sum _{S'}P(S'|S,A_i^\alpha)V_i(S'))^\top . 
  \label{A2_1}
\end{equation}
\end{proposition}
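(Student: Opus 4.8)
The plan is to exploit the linearity of the Bellman recursion together with the fact that the weight vector $W$ is a fixed constant, independent of the state, the action, and the time step. Recall from Equations (\ref{s4_1})--(\ref{s4_3}) that the scalar reward of adversary $i$ is the weighted sum $R_i^\alpha(S,A_i) = W \times (R_i(S,A_i))^\top = \sum_j W_j R_j^i$ of the per-failure-path rewards, and that the value function $V_i^{\pi^\alpha}$ entering the redefined independent Q-value (\ref{s4_6}) is, by construction, the solution of the Bellman recursion (\ref{s3_1_1}) driven by this scalar reward. The goal is therefore to show that applying the weighted sum and then running the Bellman recursion gives the same result as running the recursion componentwise on the reward vector $R_i$ and only then taking the weighted sum.

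First I would introduce, for each failure path $j$, the component value function $V_{i,j}$ defined by the same recursion (\ref{s3_1_1}) but driven only by the single reward component $R_j^i$, and collect these into the vector $V_i = \{V_{i,1},\dots,V_{i,n}\}$ appearing on the right-hand side of (\ref{A2_1}). The crux is the decomposition identity $V_i^{\pi^\alpha}(S') = W \times (V_i(S'))^\top = \sum_j W_j V_{i,j}(S')$. To establish it, I would substitute the candidate $\tilde V := \sum_j W_j V_{i,j}$ into the Bellman equation (\ref{s3_1_1}) for reward $R_i^\alpha = \sum_j W_j R_j^i$; because each $W_j$ is a constant, it can be pulled through the policy expectation and the transition expectation, so $\tilde V$ is seen to satisfy exactly the recursion driven by $R_i^\alpha$, whence $\tilde V = V_i^{\pi^\alpha}$.

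With the decomposition in hand, the proposition follows by direct substitution. Starting from (\ref{s4_6}), I would replace $R_i^\alpha(S,A_i)$ by $W \times (R_i)^\top$ and $V_i^{\pi^\alpha}(S')$ by $W \times (V_i(S'))^\top$, and then factor the constant vector $W$ out of both terms, using that the transition weights $P(S'|S,A_i^\alpha)$ are scalars that commute with $W$. This yields $Q_i^{\pi^\alpha}(S,A_i^\alpha) = W \times (R_i + \gamma \sum_{S'} P(S'|S,A_i^\alpha) V_i(S'))^\top$, which is precisely Equation (\ref{A2_1}). Note that Proposition \ref{prop_1} is what licenses writing the dynamics purely as $P(S'|S,A_i^\alpha)$, so the recursion legitimately involves only the adversarial action.

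The step I expect to be the main obstacle is the decomposition identity $V_i^{\pi^\alpha} = \sum_j W_j V_{i,j}$: although it is morally "just linearity," making it rigorous requires care. In the finite-horizon episodic setting (episodes of length $l$, as in our tasks) the cleanest route is induction on the horizon, where the value is a finite nested sum and the weights survive every nesting of the expectation; in the discounted infinite-horizon view one instead invokes uniqueness of the Bellman fixed point under $\gamma<1$. In either case the essential point to verify is that $W_j$ is genuinely state- and time-independent, so that it can be commuted past every expectation operator; once that commutation is justified, the remaining substitutions are routine bookkeeping.
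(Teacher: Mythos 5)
Your proposal is correct and follows essentially the same route as the paper's proof: both reduce the proposition to the decomposition identity $V_i^{\pi^\alpha}(s') = W \times V_i(s')^\top$ and then establish that identity by commuting the constant weight vector $W$ past the policy and transition expectations. The only difference is in how that lemma is finished off --- the paper uses backward induction over the steps of a finite episode (terminal value zero, then one-step Bellman backups), which is precisely the finite-horizon induction you yourself flag as the cleanest route, while your headline argument instead invokes uniqueness of the Bellman fixed point under $\gamma<1$; both justifications are standard and both are valid here.
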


\begin{proof}
    As mentioned in Section \ref{section_tech_overview}, with our reshaped reward functions, the independent Q-values of each adversarial agent can be written as:

\begin{equation}
  Q^{\pi^\alpha}_i(S,A_i^\alpha)=W \times R_i(S,A_i)^\top+\gamma \sum _{S'}P(S'|S,A_i^\alpha)V_i^{\pi^\alpha}(S') . 
  \label{A2_2}
\end{equation}

In Equation (\ref{A2_1}) and (\ref{A2_2}):

\begin{equation}
    \begin{aligned}
      &V_i(s') = \sum_a \pi(a|s')\sum_{s''}p(s''|s',a)[R_i(s', a)+\gamma V_i(s'')], \\
      &V_i^{\pi^\alpha}(s') = \sum_a \pi(a|s')\sum_{s''}p(s''|s',a)[R_i^{\pi^\alpha}(s', a)+\gamma V_i^{\pi^\alpha}(s'')].
    \end{aligned}
    \label{A2_3}
\end{equation}

By comparing Equation (\ref{A2_1}) with Equation (\ref{A2_2}), we demonstrate that proving Proposition \ref{prop_2} reduces to verifying the equality \(V_i^{\pi^\alpha}(s') = W \times V_i(s')^\top\). Subsequently, we employ mathematical induction to prove this equality. 

With a finite and complete set of trajectories \(\{tr_{n-(n-1)}, tr_{n-(n-2)}, \dots , tr_{n-m}, \dots, tr_{n-1}, tr_{n-0}\}\), we proceed by mathematical induction on \((m)\)  for \(m \in [0, n-1]\).

\textbf{Base Case}. For \((m=0)\), \(tr_{n-m}\) is the last trajectory of the set, from which we have \(V_i^{\pi^\alpha}(s_{n+1}) = W \times V_i(s_{n+1})^\top = 0\). Therefore, we have the following mathematical derivations:

\begin{equation}
    \begin{aligned}
      V_i^{\pi^\alpha}(s_{n-m}) &= V_i^{\pi^\alpha}(s_{n}) = \sum_a \pi(a|s_n)\sum_{s_{n+1}}p(s_{n+1}|s_{n},a)R_i^{\pi^\alpha}(s_{n}, a) \\
      &=\sum_a \pi(a|s_n)\sum_{s_{n+1}}p(s_{n+1}|s_{n},a)W\times R_i(s_{n}, a)^\top \\
      &= W \times V_i(s_{n})^\top \\
      &= W \times V_i(s_{n-m})^\top.
    \end{aligned}
\end{equation}

\textbf{Inductive Hypothesis}. Assume the equality \(V_i^{\pi^\alpha}(s') = W \times V_i(s')^\top\) holds for \((m=k)\), which is \(V_i^{\pi^\alpha}(s_{n-k}) = W \times V_i(s_{n-k})^\top\).

\textbf{Inductive Step}. For \((m=k+1)\), we have:

\begin{equation}
    \begin{aligned}
      &\ \ \ \ \ \ V_i^{\pi^\alpha}(s_{n-m}) = V_i^{\pi^\alpha}(s_{n-(k+1)}) \\ &= \sum_a \pi(a|s_{n-(k+1)})\sum_{s_{n-k}}p(s_{n-k}|s_{n-(k+1)},a) \times \\ 
      &\ \ \ \ \ [R_i^{\pi^\alpha}(s_{n-(k+1)}, a) + \gamma V_i^{\pi^\alpha}(s_{n-k})] \\
      &=\sum_a \pi(a|s_{n-(k+1)})\sum_{s_{n-k}}p(s_{n-k}|s_{n-(k+1)},a) \times \\
      &\ \ \ \ \ [W\times R_i(s_{n-(k+1)}, a)^\top + \gamma W \times V_i(s_{n-k})^\top] \\
      &= W \times V_i(s_{n-(k+1)})^\top \\
      &= W \times V_i(s_{n-m})^\top.
    \end{aligned}
\end{equation}

\textbf{Conclusion}. By induction, the equality \(V_i^{\pi^\alpha}(s') = W \times V_i(s')^\top\) is valid for a finite and complete set of trajectories \(\{tr_{n-(n-1)}, tr_{n-(n-2)}, \dots , tr_{n-m}, \dots, tr_{n-1}, tr_{n-0}\}\) where \(m \in [0, n-1]\). 

\end{proof}
\section{Neutral agent-based adversarial policy learning algorithm}
\label{AP3}

Neutral agent-based adversarial policy learning algorithm is displayed as \ref{alg}.

\begin{algorithm}
\renewcommand{\algorithmicrequire}{\textbf{Input:}}
\renewcommand{\algorithmicensure}{\textbf{Output:}}
\caption{Neutral agent-based adversarial policy learning algorithm}
\label{alg}
\begin{algorithmic}[1]
    \REQUIRE the Deep Q Networks of adversarial agents' policies \(\pi_i\) parameterized by \(\theta_i\) where \(i\in[1,N]\), the mixing network of QMIX \(QT\) parameterized by \(\theta_q\), the LSTM Network \(M\) of reward model parameterized by \(\theta_m\) if using estimation-based reward model, a set of well trained victim agents \(V_i\) where \(i\in [1,H]\), a state transition function F.
    \ENSURE A set of well-trained adversarial policy network \(\pi_i\) and reward estimator LSTM network \(M\).
    \STATE \textbf{Initialization:}\(\theta_i\), \(\theta_q\), \(\theta_l\), hidden state \(h\) of LSTM
    \FOR{\(k=0,1,2...K\) do}
        \STATE Reset environment global state to \(S_0\)
        \FOR{\(t=0,1,2...T\) do}
            \FOR{\(i=0,1,2...N\) do}
                \STATE Adversarial agent i get observation \(o_i\)  and available actions \(a_{i}^{av}\) from \(S_t\)
                \STATE Adversarial agent i choose action: \\ \(a_i=argmax_a \pi_i(o_i, a_{i}^{av})\)
            \ENDFOR
            \STATE Each agents in environment take joint action \(A_t\)
            \STATE Update global state S by state function F:
            \\
            \(S_{t+1}=F(S_t, A_t)\)
            \STATE Calculate reward \(r_t\) of adversarial agents by reward estimator LSTM network \(M\).
        \ENDFOR
        \STATE Collect a set of trajectories \(D^k\) where \(D^k_t=(o_t, A_t^{adv},r_t)\)
        \IF{reward model is estimation-based}
            \STATE Update \(\theta_m\) by loss function (\ref{16})
        \ENDIF
        \STATE Choose a trajectory \(D^k\) to update adversarial policy
        \STATE Compute independent Q value for each adversarial agent \(i\) at each step \(t\): \(Q_i=\pi_i(a_i^t, o_i^t)\)
        \STATE Compute total Q \(Q^{tot}=QT(q_1,q_2,...q_N)\)
        \STATE Update \(\theta_i\), \(\theta_q\) by loss function (\ref{s4_10})
    \ENDFOR
\end{algorithmic}
\end{algorithm}

\section{Detailed introduction of evaluation platform}
\label{AP4}

\textbf{Starcraft \uppercase\expandafter{\romannumeral 2} and SMAC}. Starcraft \uppercase\expandafter{\romannumeral 2} is a real-time strategy game developed by Blizzard Entertainment and released on July 27, 2010. It involves one or more players competing against each other or built-in game AI by gathering resources, constructing buildings, and assembling armies to defeat opponents. The decision-making process in StarCraft \uppercase\expandafter{\romannumeral 2} can be divided into two main categories: macro decisions and micro decisions. Macro decisions involve high-level strategic considerations, such as economic and resource management, while micro decisions involve fine-grained control operations over individual units.

To better demonstrate the evolving capabilities of reinforcement learning agents, their evaluation often places greater emphasis on micro decisions. In the context of StarCraft \uppercase\expandafter{\romannumeral 2}, micro decisions have a very high skill ceiling, requiring both amateur and professional players to repeatedly practice and improve this ability. When testing multi-agent reinforcement learning (MARL), each unit is controlled by an independent agent, which must be trained to complete challenging combat scenarios based on local observations. These agents aim to maximize damage dealt to enemy units while minimizing self-inflicted damage, collaborating with each other to defeat enemies.

SMAC consists of a set of micro-scenarios in StarCraft \uppercase\expandafter{\romannumeral 2}, designed to evaluate the ability of reinforcement learning algorithms to learn how to solve complex tasks. In these well-designed scenarios, agents must learn micro-level operations to defeat enemies. Each scenario involves a combat between two opposing militaries. The terrain, initial positions, quantities, and unit types of each military vary depending on the specific scenario.

The maps in SMAC typically involve two opposing militaries. In our experiments, the victim agents control one of the militaries, while the other is controlled by the game's built-in programs. Based on the assumptions of our work, we added third-party agents into the game environment. These third-party agents do not directly cause harm to the military controlled by the victim agents. The objective of victim agents is to defeat the military controlled by the game's built-in programs within limited time. During the training of the victim agents, the behaviors of all third-party units are completely random. Once the victim agents are well-trained, we train a few third-party agents as adversarial agents with fixed victim policy. In our experiment, we follow the metric commonly used for evaluating reinforcement learning, measuring the winning rate and average reward of the adversarial agents at each iteration.

\textbf{Highway-Env}.Highway-Env is an open-source Python simulation environment specifically designed to facilitate research in decision-making for autonomous vehicles, with a focus on behavioral planning and motion planning using Reinforcement Learning (RL). Developed to provide a lightweight, modular, and highly configurable platform, it abstracts low-level vehicle dynamics through simplified kinematic models to prioritize learning high-level tactical maneuvers. The environment features diverse, configurable driving scenarios—including multi-lane highway navigation, roundabout negotiation, goal-oriented parking, unsignalized intersection crossing, and racetrack driving—that model critical interactions like lane changes, overtaking, merging, and congestion handling. Its core strength lies in seamless compatibility with standard RL frameworks (via OpenAI Gym/Gymnasium APIs), extensive configurability of road networks, traffic parameters, reward functions, observation spaces (e.g., state vectors, occupancy grids), and action spaces (discrete or continuous). While its integrated PyGame-based visualization supports debugging and its low computational footprint enables rapid prototyping on standard hardware, Highway-Env deliberately sacrifices high-fidelity physics and realistic sensor simulation (e.g., cameras, LiDAR) to concentrate research efforts on strategic decision-making. Consequently, it serves as an accessible and efficient tool for developing, benchmarking, and evaluating autonomous driving algorithms, particularly within RL research and educational contexts, despite simplifications in background traffic behavior and vehicle kinematics.

\section{Supplementary experiment material}
\label{AP6}

\begin{figure*}
    \centering
    \includegraphics[width=1\linewidth]{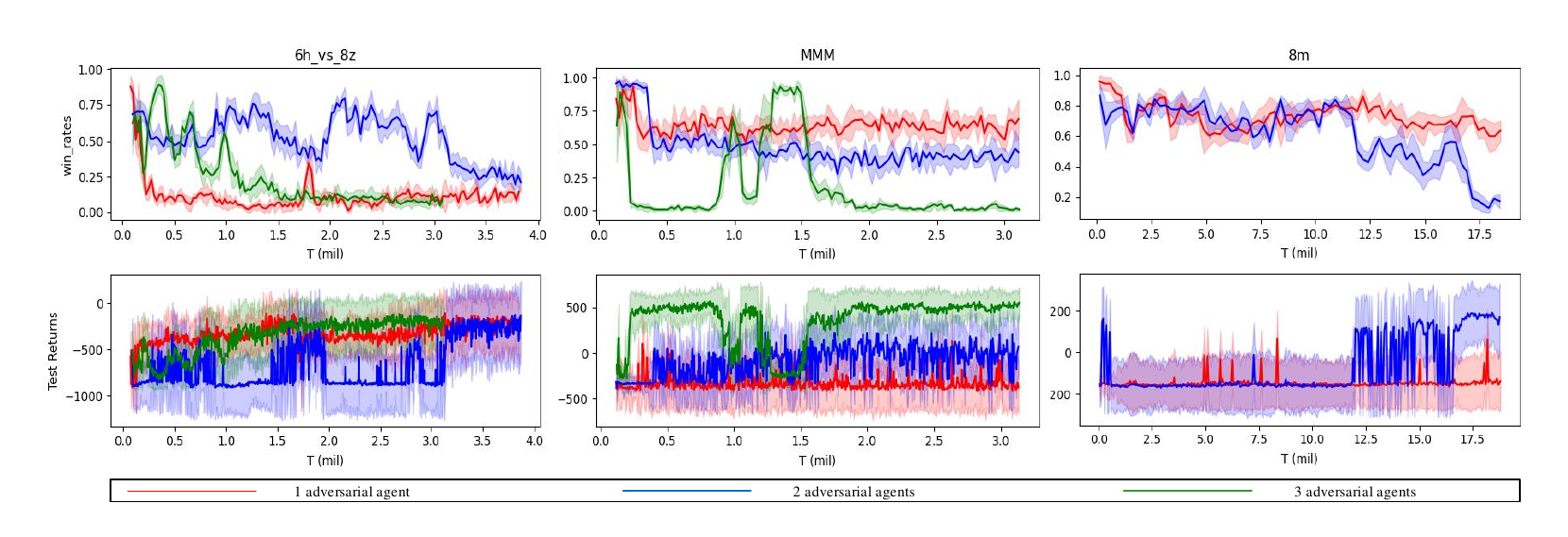}
    \caption{Comparison of wining rates and rewards trend during training across deploying from 1 to 3 adversarial agents in Starcraft \uppercase\expandafter{\romannumeral 2} maps}
    \label{fig:5}
\end{figure*}

\begin{figure}
    \centering
    \includegraphics[width=1\linewidth]{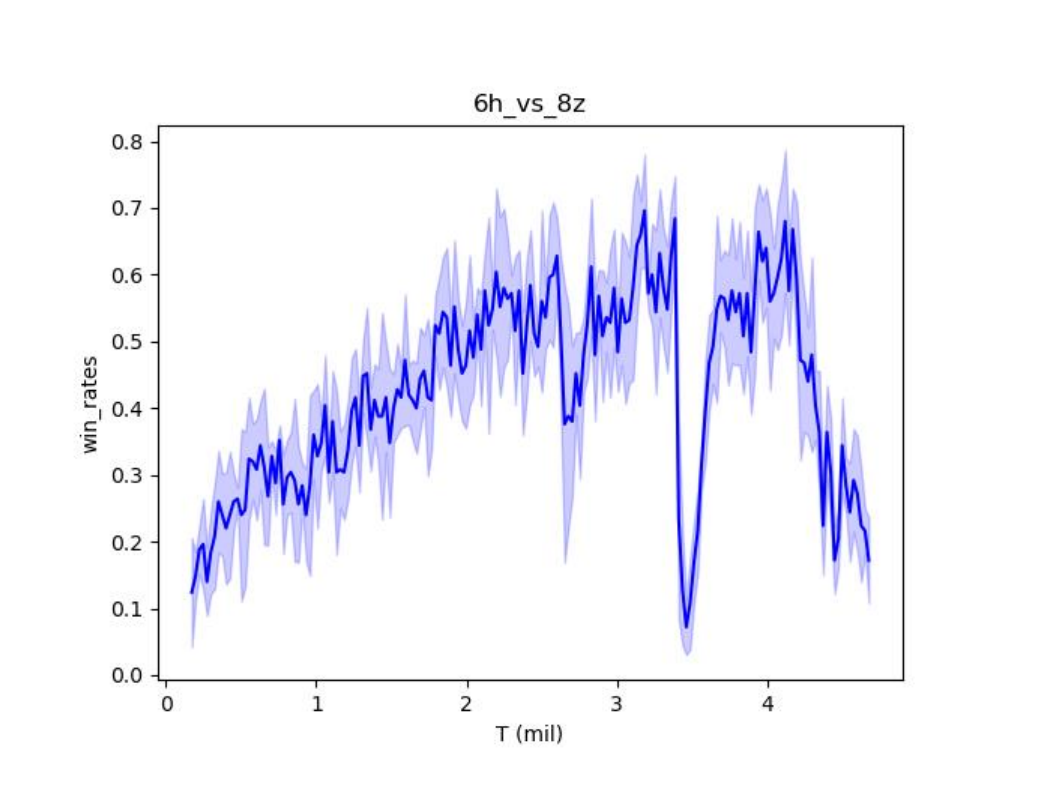}
    \caption{Winning rate status when retrain victim agents on map "6h\_vs\_8z" with 3 adversarial agents}
    \label{fig:7}
\end{figure}










\end{document}